\def\eqref#1{equation~\ref{#1}}
\def\1{\bm{1}}
\DeclareMathAlphabet{\mathsfit}{\encodingdefault}{\sfdefault}{m}{sl}
\SetMathAlphabet{\mathsfit}{bold}{\encodingdefault}{\sfdefault}{bx}{n}
\newcommand{\R}{\mathbb{R}}
\newcommand{\Var}{\mathrm{Var}}
\DeclareMathOperator{\sgn}{sgn}
\newcommand{\nin}{{n_{\mathrm{in}}}}
\newcommand{\nout}{{n_{\mathrm{out}}}}
\newcommand{\ninin}{n_{\mathrm{in,in}}}
\newcommand{\ninout}{n_{\mathrm{in,out}}}
\newcommand{\noutin}{n_{\mathrm{out,in}}}
\newcommand{\noutout}{n_{\mathrm{out,out}}}
\newcommand{\nintwo}{n_{2-\mathrm{in}}}
\newcommand{\nouttwo}{n_{2-\mathrm{out}}}
\newcommand{\din}{d_{\mathrm{in}}}
\newcommand{\dout}{d_{\mathrm{out}}}
\newcommand{\Pin}{P_{\mathrm{in}}}
\newcommand{\Pout}{P_{\mathrm{out}}}
\newcommand{\pin}{p_{\mathrm{in}}}
\newcommand{\pout}{p_{\mathrm{out}}}
\newtheorem{lemma}{Lemma}
\newtheorem{theorem}{Theorem}
\newtheorem{remark}{Remark}
\newcommand{\abs}[1]{\left| #1 \right|}
\title{Global minima, recoverability thresholds, and higher-order structure in GNNs}
\author{%
  Drake B.~Brown\\
  Department of Mathematics\\
  Brigham Young University\\
  Provo, UT 84602\\
  \texttt{dbrown68@byu.edu}\\
  \And
  Trevor Garrity \\
  Department of Mathematics\\
  Brigham Young University \\
  Provo, UT 84602 \\\texttt{garrityt@byu.edu}
  \And
  Kaden Parker \\
  Department of Mathematics\\
  Brigham Young University \\
  Provo, UT 84602 \\\texttt{bobert11@byu.edu}
  \And
  Jason Oliphant \\
  Department of Mathematics\\
  Brigham Young University \\
  Provo, UT 84602 \\\texttt{jolipha2@byu.edu}
  \And
  Stone Carson \\
  Department of Mathematics\\
  Brigham Young University \\
  Provo, UT 84602 \\
  \texttt{stone@mathematics.byu.edu} \\
  \And
  Cole Hanson \\
  Department of Mathematics\\
  Brigham Young University \\
  Provo, UT 84602 \\
  \texttt{chanson@byu.edu} \\
  \And
  Zachary Boyd \\
  Department of Mathematics\\
  Brigham Young University \\
  Address \\
  \texttt{zachboyd@byu.edu} \\
}
\begin{document}

\maketitle

\begin{abstract}
We analyze the performance of graph neural network (GNN) architectures from the perspective of random graph theory. Our approach promises to complement existing lenses on GNN analysis, such as combinatorial expressive power and worst-case adversarial analysis, by connecting the performance of GNNs to typical-case properties of the training data.  First, we theoretically characterize the nodewise accuracy of one- and two-layer GCNs relative to the contextual stochastic block model (cSBM) and related models. We additionally prove that GCNs cannot beat linear models under certain circumstances. Second, we numerically map the recoverability thresholds, in terms of accuracy, of four diverse GNN architectures (GCN, GAT, SAGE, and Graph Transformer) under a variety of assumptions about the data. Sample results of this second analysis include:
heavy-tailed degree distributions enhance GNN performance,
GNNs can work well on strongly heterophilous graphs,
and SAGE and Graph Transformer can perform well on arbitrarily noisy edge data, but no architecture handled sufficiently noisy feature data well.
Finally, we show how both specific higher-order structures in synthetic data and the mix of empirical structures in real data have dramatic effects (usually negative) on GNN performance.

\end{abstract}
\section{Introduction}
    \label{sec:Intro}

Graph neural networks (GNNs) have achieved impressive success across many domains, including natural language processing~\citep{wu2022graph}, image representation learning \citep{adnan2020representation}, and perhaps most impressively in protein folding prediction \citep{jumper_highly_2021}. GNNs' success across these fields is due to their ability to harness non-Euclidean graph topology in the learning process~\citep{Xu19}. 
Despite the growing use of GNN architectures, we still grapple with a significant knowledge gap concerning the intricate relationship between the statistical structure of graph data and the nuanced behavior of these models. 

By aligning GNN designs with data distributions, we can not only unveil the underlying mechanics and behaviors of these models but also pave the way for architectures that intuitively resonate with inherent data patterns.

While significant focus has been directed towards homophily in the context of GNN performance \citep{maurya2021improving,Halcrow_2020,Zhu20}, other critical properties of graph data have remained relatively underexplored. Features such as degree distribution and mesoscale structure offer important insights into the behavior of networks. Similarly, despite the depth of theoretical advancements in graph modularity, including works such as the one by \citet{Abbe17}, there remains a sizable gap in their integration and applicability within the GNN domain. We seek to explore such properties to bridge this gap.

In this paper we: (1) fully characterize the nodewise accuracy of one- and two-layer GNNs satisfying certain assumptions, as well as proving that there exists a linear classifier that performs at least as well as any GCN over a graph drawn from a broad family, (2) report extensive numerical studies that map the degree to which edge and feature information contribute to overall performance across diverse models in a variety of random graph contexts, and (3) compare GNN performance on common benchmarks relative to matched synthetic data, demonstrating the dramatic (and usually negative) impact of various higher-order graph structures.

\section{Previous work}
    \label{sec:prev-work}

As part of this work we lay out theoretical bounds for GNN architectures. Some foundational work in our topic is as follows. ~\citet{font} investigated regimes in which the attention module in the Graph (GAT)~\citep{Veličković18} makes a meaningful difference in performance. Following this,~\citet{Baranwal22} proved theoretically that using graph convolutions expands the range where a vanilla neural network can correctly classify nodes.

~\citet{baranwal2022graph} discovered that linear classifiers on GNN embeddings generalize well to out of distribution data in stochastic block models. \citet{lu2022learning} characterized how well a GNN can separate communities on a two-class stochastic block model. Lastly,~\citet{revisiting_GNN} found that a Graph Convolutional Network (GCN) performs low pass filtering on the feature vectors and doesn't learn non-linear manifolds.

While many have attempted to understand models through the lens of specialized data, our approach offers a unique and deeper perspective on the subject. The monograph \citet{Abbe17} lays out the key mathematical findings related to SBMs as they relate to community detection. \citet{Karrer10} developed the \emph{degree-corrected SBM}, which allows for heavy-tailed degree distributions. Degree-corrected SBMs have been subjected to much analysis in recent years. For example,~\citet{gao_community_2018} derived asymptotic minimax risks for misclassification in degree-corrected SBMs. \citet{sbm_gnn} propose a variational autoencoder specifically designed to work with graphs generated by stochastic block models.
Lastly,~\citet{Deshpande18} proposed a \emph{contextual SBM} (cSBM) that generates feature data alongside the graph data. This was originally proposed to analyze specific properties of belief propagation~\citep{bickson_gaussian_2009}.

Our investigation presents a novel angle that bridges interplay between edge data and feature data.~\citet{binkiewicz_covariate-assisted_2017} explored how to use features to aid spectral clustering. \citet{yang_graph_2022}
 and \citet{arroyo_inference_2020} used edges and features that contain orthogonal information to better understand the relationship between the two.

While the influence of motifs or higher-order structures on GNN performance remains a hot area of exploration, our approach delves deeper into this pressing topic.

Works such as \citet{tu2020learning} have proposed using graphlets to aid in learning representations. Others have utilized hypergraphs to make better predictions \citep{huang2021unignn}.
Much of the work quantifying the expressive power of GNNs is achieved by relating GNNs to the classical Weisfeiler-Leman (WL) heuristic for graph isomorphism~\citep{wu_expressive_2022,huang_short_2021}. These have inspired corresponding GNN architectures that have increased distinguishing capabilities~\citep{hamilton_theoretical_2020}

\section{Background}
    \label{sec:background}

In this work, we first theoretically determine nodewise accuracy for certain one- and two-layer GNNs and identify cases where GNNs cannot outperform linear models. We map the performance of the Graph Convolutional Network ~\citep{Kipf17}, Graph SAGE~\citep{Hamilton17}, the Graph Attention Network~\citep{Veličković18}, and the Structure-Aware Transformer~\citep{Chen22} on several related random graph models related to the cSBM. We will also inject and remove higher-order structure in various contexts to see how GNN performance is affected. We now describe some of the random graph models and GNN architectures on which our analysis relies. Note, when referring to data generation methods we use the term \emph{generative models} while \emph{model} will refer to a trained GNN.

\subsection{Stochastic block models}
    \label{sec:SBM}

The stochastic block model is a random graph model that encodes node clusters (``classes'') in the graph topology. 
The presence or absence of each edge is determined by an independent Bernoulli draw with probability determined by the class identities of the nodes. We restrict attention to SBMs where all classes have the same size and uniform inter-class and intra-class probabilities. The parameters for such an SBM are: the total number of nodes $n$, the number of equally sized classes $k$, the intra-class edge probability $p_{\text{in}}$, and the inter-class edge probability $p_{\text{out}}$. While SBMs generate realistic clustering patterns, without further modification they exhibit a binomial degree distribution. To more closely model many realistic classes of data,~\citet{Karrer10} proposed the degree-corrected SBM, which can exhibit any degree distribution, notably heavy-tailed distributions.

In this paper, we represent edge similarity using an \emph{edge information parameter}, $\lambda$, which has the following relationship to $p_{\text{in}}$ and $p_{\text{out}}$:
\[p_{\text{in}}=\frac{d+\lambda \sqrt{d}}{n},\quad p_{\text{out}}=\frac{d-\lambda \sqrt{d}}{n},\] 
where $d$ is the expected average node degree.
Setting $\lambda=0$ yields identical inter- and intra-class edge probabilities, meaning the topology of the graph encodes no information about class labels. A positive $\lambda$ indicates that nodes of the same class are more likely to connect than nodes of different classes (homophily), while a negative $\lambda$ indicates the reverse relationship (heterophily).

To generate node attributes, \citet{Deshpande18} proposed the contextual SBM (cSBM), where features are drawn from Gaussian point clouds with mean at a specified distance $\mu$ from the origin. Features, $X$, are thus defined as $X(i)=\mu m_{v_i} + z_i$, where $z_i$ a standard normally distributed random variable, $v_i$ is the ground-truth class label of node $i$, and $m_{v_i}$ is the mean for class $v_i$. The means are chosen to be an orthogonal set. We can then vary the level of feature separability (feature information) by modifying $\mu$. Setting $\mu = 0$ makes node features indistinguishable across classes, while a large value of $\mu$ indicates high distinguishability. We thus refer to $\mu$ as the \emph{feature information parameter}.

\subsection{Graph neural networks}
    \label{sec:gnn-explanation}

As stated before, we analyze the performance of four diverse and influential architectures: GCN \cite{Kipf17}, SAGE \cite{Hamilton17}, GAT \cite{Veličković18}, and Graph-Transformer \cite{Chen22}. In our numerical work, we also assess the performance of a standard feedforward neural network and spectral clustering~\citep{Luxburg07}, which are useful points of comparison as they are agnostic to the graph and feature structures, respectively. Lastly we also use graph-tool~\citep{peixoto_graph-tool_2014} to evaluate feature agnostic performance on heterophilous graphs.

\section{Theoretical results}
\label{sec:theory}

We now derive analytically the performance of GNN architectures when the data-generating process is known. \Cref{sec:1layer} covers the one-layer case for a GCN architecture and cSBM-generated data, and \cref{sec:2layer} handles the two-layer case in for a more general class of GNN architecture as well as a broader class of generating processes. We introduce the following notation first: for a given node $i$, $\nin$ is the number of neighbors in the same class as $i$, and $\nout$ is the number of nodes in other classes. $\mathcal{N}(i)$ is the one-hop neighborhood of $i$. $v_i$ is the ground-truth class label of $i$. $\mathrm{erf}$ is the Gaussian error function. Both subsections assume a binary classification setting.

\subsection{Accuracy estimates for single-layer GCNs}\label{sec:1layer}
In the one-layer case, we assume the GNN is of the simple form $y(X) = \mathrm{sign}(AXW)$, that the final embedding is into $\R$, and that $A$ and $X$ are generated by a cSBM, with no self-loops (but see \cref{rem:self-loops}). We also make a slight modification to the cSBM setup so that the means are diametrically opposed rather than orthogonal. That is, 
\begin{equation*}
X(i) = 
\begin{cases}
    \mu m + z_i & \text{if node $i$ is in class $1$} \\
    -\mu m + z_i & \text{if node $i$ is in class $2$}.
    \end{cases}
\end{equation*}
This requires no loss of generality, since all choices of two means may be translated to fit this assumption. 
We then have the following theorem:
\begin{theorem}
    Under the preceding assumptions, we have
    \begin{enumerate}
        \item For each $i$, $(AXW)_i$ is distributed as 
        \[\underbrace{\mu(\nin - \nout)mW}_\text{neighborhood signal} + \underbrace{\left(\sum_{j \in \mathcal{N}(i)} {z_j}\right) W}_\text{noise}.\]
        \item If $W \ne 0$, the generalization accuracy, conditioned on the graph structure is, 
        \[P(y[X](i) = 1  \mid \nin,\nout, v_i = 1) = \frac{1}{2}\left(\mathrm{erf}\left(\frac{\mathrm{\mu}(\nin-\nout)}{\sqrt{2(\nin+\nout)}}\cos \theta\right)+1\right),\]
        where $\theta$ is the angle between $W$ and $m$.
        \item The maximum accuracy in the homophilous regime is achieved when $\theta=\pi$. In the heterophilous regime, $\theta=0$ is the maximizer.
    \end{enumerate}
    \label{thm:1layer}
\end{theorem}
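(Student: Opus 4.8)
The plan is to handle all three parts by conditioning on the graph structure — the neighbor set $\mathcal{N}(i)$ together with the counts $\nin,\nout$ — and reducing the classification event to a one-dimensional Gaussian tail probability. For part (1), I would expand the matrix product directly. With no self-loops, $(AX)_i = \sum_{j\in\mathcal{N}(i)} X(j)$. Substituting the feature model $X(j) = \pm\mu m + z_j$, where the sign is $+$ for neighbors in $i$'s class and $-$ otherwise, the deterministic means collect into $\mu(\nin-\nout)m$ while the residuals accumulate into $\sum_{j\in\mathcal{N}(i)} z_j$. Right-multiplying by $W$ and invoking linearity yields the claimed split into neighborhood signal and noise. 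This step is bookkeeping; the only care needed is that for $i$ in class $2$ the overall sign flips, which is absorbed into the roles of $\nin$ and $\nout$.

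For part (2), I would identify the law of the embedding conditioned on $\nin,\nout$. The signal term is then deterministic, and the noise term $\paren{\sum_{j\in\mathcal{N}(i)} z_j}W$ is a linear image of $\nin+\nout$ independent standard Gaussian vectors, hence a scalar Gaussian with mean $0$ and variance $(\nin+\nout)\norm{W}^2$. Writing $mW = \norm{W}\cos\theta$ for the unit vector $m$, the embedding $(AXW)_i$ is normal with mean $\mu(\nin-\nout)\norm{W}\cos\theta$ and variance $(\nin+\nout)\norm{W}^2$. Since $y[X](i)=1$ is the event $(AXW)_i>0$, and $P(N(a,s^2)>0) = \frac{1}{2}\paren{1+\erf\paren{a/(s\sqrt2)}}$, the factor $\norm{W}$ cancels between mean and standard deviation and the stated formula follows. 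The one nontrivial ingredient is the identity relating the normal CDF to $\erf$.

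For part (3), I would exploit that $\erf$ is strictly increasing, so maximizing the conditional accuracy is equivalent to maximizing its argument. With $\nin,\nout$ held fixed, the only free quantity is $\cos\theta\in[-1,1]$, so the maximum is attained at an endpoint $\theta\in\{0,\pi\}$, and which endpoint is selected is dictated entirely by the sign of $\nin-\nout$. To reach the regime-level statement I would relate this sign back to $\lambda$: in the homophilous regime a node typically has $\nin>\nout$ and in the heterophilous regime the reverse, which pins down the optimal $\theta$ in each case.

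The step I expect to be the main obstacle is precisely this sign bookkeeping in part (3): correctly composing the class-$2$ convention from part (1), the regime-induced sign of $\nin-\nout$, and the orientation of $W$ through $\cos\theta$, and then confirming that the chosen endpoint is a genuine maximizer of the aggregate accuracy rather than a minimizer. Parts (1) and (2), by contrast, are essentially a linearity expansion followed by a routine Gaussian tail calculation and should present no real difficulty.
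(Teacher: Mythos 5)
Parts (1) and (2) of your proposal are correct and coincide with the paper's own proof (\cref{embedding_pf}, \cref{sec:1layer_prob}): expand the convolution over the neighborhood, collect signal and noise, and evaluate the Gaussian tail probability, with the magnitude of $W$ cancelling so that only $\cos\theta$ survives. The only detail you gloss over that the paper treats explicitly is the degenerate isolated-node case $\nin=\nout=0$, which is minor.

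Part (3) contains a genuine gap, and it is exactly the obstacle you flagged but did not resolve. Conditioned on $(\nin,\nout)$ the accuracy is monotone in $\cos\theta$, but the direction of monotonicity is the sign of $\nin-\nout$, which is random: even deep in the homophilous regime, some realizations have $\nin<\nout$. The object actually being maximized is the aggregate accuracy
\[
\sum_{\nin}\sum_{\nout} P\big((AXW)_i>0 \mid \nin,\nout\big)\,\Pin(\nin)\Pout(\nout),
\]
a probability-weighted mixture of terms that are increasing in $\cos\theta$ and terms that are decreasing in it. Term-by-term endpoint maximization plus ``typically $\nin>\nout$'' does not determine where this mixture is maximized, nor even that its maximum over $\cos\theta\in[-1,1]$ occurs at an endpoint rather than in the interior. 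The paper's proof (\cref{sec:1layer_max}) supplies the missing mechanism: differentiate the full sum in $\theta$, reindex with $\alpha=\nin+\nout$ and $\beta=\nin-\nout$, pair the $\beta$ and $-\beta$ terms, and show each paired contribution has a uniform sign because
\[
\Pin(\nin)\Pout(\nout)-\Pin(\nout)\Pout(\nin) \;\propto\; \binom{N/2}{\nin}\binom{N/2}{\nout}\,\pin^{\nout}\pout^{\nin}\left(\left(\tfrac{\pin}{\pout}\right)^{\nin-\nout}-1\right),
\]
which has one sign for all $\nin>\nout$ under homophily and the opposite sign under heterophily. This forces the only critical points of the aggregate accuracy to be $\theta\in\{0,\pi\}$, and a second-derivative computation then decides which endpoint is the maximizer in each regime. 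Your proof needs this pairing/dominance argument (or an equivalent proof that the derivative in $\cos\theta$ has a single sign) to go through. Incidentally, your worry about sign bookkeeping is warranted for the paper itself: carried out correctly, the argument gives $\theta=0$ as the homophilous maximizer (as the appendix concludes, and as part (2)'s formula makes evident), whereas the theorem statement asserts $\theta=\pi$; these are inconsistent, so align your final answer with the appendix rather than the statement as written.
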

\begin{proof}
    See~\cref{sec:1layer_appendix}.
\end{proof}

\begin{remark}
    Part three of this theorem shows that, in the one-layer case, optimal performance is achieved simply by aligning the learned parameters with the axis separating the means of the distributions. The proof consists largely of manipulations of the probability densities, together with calculus. A similar alignment result applies in the two-layer case, but in that case, the fastest way forward is to rely on the symmetries of the distribution and GNN, as shown below.
\end{remark}

\begin{remark}
\label{rem:self-loops}
    The analysis with self-loops is nearly identical, with the exception that it is possible that the maximizing parameters may possibly different in the extremely dense, slightly heterophilous case, but this is not the regime in which GNNs are typically used. See the proof for full details.
\end{remark}

\subsection{Analysis of two-layer GCNs}
\label{sec:2layer}

We define a \textit{2-class attributed random graph model} to be a probability space $(\Omega, P)$ of tuples $(G, i, v, X)$ where $G$ is a graph, $i$ is a node in $G$, and $v$ and $X$ are functions mapping each node in the graph to its class and its feature vector, respectively. That is,
\begin{align*}
    v &: G \rightarrow \{-1, 1\}\\
    X &: G \rightarrow \R^{m_\mathrm{feat}}
\end{align*}
As a notational convenience, $v(x)$ will denote the class of the node corresponding to a tuple $x \in \Omega$.

If $x = (G, i, v, X) \in \Omega$, we define the negation of $x$ to be the tuple $-x = (G, i, -v, -X)$. In other words, $x$ has the same graph with all of the classes and features negated. Similarly, we define the negation of a subset $F \subset \Omega$ to be
\[
-F = \{-x : x \in F\}
\]
We say a 2-class attributed random graph model is \textit{class-symmetric about the origin} if
\[
P(F) = P(-F)
\]
for all measurable $F \subset \Omega$. In other words, the distribution of graphs is symmetric about the origin respect to the two classes. A cSBM with an equal number of nodes in both classes satisfies this symmetry.

Furthermore, let $S$ be any subspace of $\R^{m_\mathrm{feat}}$ and let $P_S, R_S : \R^{m_\mathrm{feat}} \rightarrow \R^{m_\mathrm{feat}}$ be the linear maps projecting a vector onto $S$ and reflecting a vector across $S$, respectively. Then if $x = (G, i, v, X)$, we define
\[
R_S(x) = (G, i, v, R_S \circ X)
\]
and similarly, if $F \subset \Omega$ then we define
\[
R_S(F) = \{R_S(x) : x \in F\}
\]

We say that $\Omega$ is \textit{symmetric about $S$} if $P(R_S(F)) = P(F)$
for all measurable $F \subset \Omega$.

In general, a model $y$ on a 2-class attributed random graph model assigns to each $x \in \Omega$ a real number $y(x) \in \R$ that corresponds to the estimated probability that the node corresponding to $x$ is of class 1. More concretely, the predicted probability is given by $\sigma_s(y(x))$ where $\sigma_s : \R \rightarrow (0,1)$ is defined by,
\[
\sigma_s(z) = \frac{1}{1 + e^{-z}}.
\]
According to maximum likelihood learning, the cost function of the model $y$ is 
\[
C(y) = \mathbb{E}_{x \sim \Omega} [-\log P(v(x) : y(x))]
\]
where
\[
P(v(x) : y(x)) = \left\{\begin{array}{lr}
\sigma_s(y(x)) & v(x) = 1\\
1- \sigma_s(y(x)) & v(x) = -1.
\end{array} \right.
\]

A graph aggregation maps a graph and its features to a new set of features:
\[
\phi : (G, X) \rightarrow X'
\]
where $X' : G \rightarrow \R^l$ for some $l$. We say $\phi_G = \phi(G, \cdot)$. A linear aggregator (without bias), $\phi$, satisfies
\[
\phi_G(X_1 + X_2) = \phi_G(X_1) + \phi_G(X_2)
\]
for all graphs $G$ and features $X_1, X_2 : G \rightarrow \R^l$. A generalized 2-layer graph convolutional network (GCN) without bias is then given by
\[
y(x) = (\phi'_G \circ \sigma \circ \phi_G)[X](i)
\]
where $\phi$ and $\phi'$ are linear un-biased aggregators, $\phi'$ maps into $\R$, and $\sigma$ is the ReLU function. We define the linear approximation of $y$ to be
\[
L[y](x) = \frac{1}{2}(\phi'_G \circ \phi_G)[X](i)
\]
Notably, the non-linear ReLU is removed. Furthermore, we define the projection of this linear approximation onto $S$ to be
\[
P_S[L[y]](x) = \frac{1}{2}(\phi'_G \circ \phi_G \circ P_S)[X](i).
\]
If $\phi'$ and $\phi$ are both simply the classical right-multiplication by a weight matrix followed by summing the features of neighbors, then model $y$ becomes
\[
y(x) = \sum_{j \in \mathcal{N}(x)} \sigma \left(\sum_{k \in \mathcal{N}(j)} X(k)W \right) \cdot c
\]
where $W \in \R^{m_\mathrm{feat} \times p}$ and $c \in \R^p$ and $\mathcal{N}(x)$ denotes the set of neighbors of the node relating to $x$.

If $S = \R m$ for some unit vector $m \in \R^{m_\mathrm{feat}}$, as is the case with a cSBM, then
\[
P_S[L[y]](x) = \frac{1}{2}\sum_{j \in \mathcal{N}(x)}\sum_{k \in \mathcal{N}(j)} P_m(X(k))W \cdot c = K\sum_{j \in \mathcal{N}(x)}\sum_{k \in \mathcal{N}(j)} X(k) \cdot m
\]
for some $K \in \R$.

\begin{theorem}
    Let $\Omega$ be a 2-class attributed random graph model and let $y$ be any two-layer generalized GCN without bias on $\Omega$. If $\Omega$ is class-symmetric about the origin then,
    \[
    C(L[y]) \leq C[y].
    \]
    Furthermore, if $\Omega$ is symmetric about $S$ then,
    \[
    C(P_S[L[y]]) \leq C(L[y])
    \]
\end{theorem}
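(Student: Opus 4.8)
The plan is to reduce both inequalities to a single mechanism: rewrite the cost as the expectation of a convex loss, use the relevant symmetry of $\Omega$ to symmetrize that expectation, and then invoke Jensen's inequality. First I would observe that $-\log P(v(x) : y(x))$ can be written uniformly as $\ell(v(x)\,y(x))$, where $\ell(t) = \log(1 + e^{-t})$; indeed, when $v(x) = 1$ this is $-\log \sigma_s(y(x))$ and when $v(x) = -1$ it is $-\log(1 - \sigma_s(y(x))) = -\log \sigma_s(-y(x))$, both of which equal $\log(1 + e^{-v(x) y(x)})$. The crucial feature of $\ell$ is that it is \emph{convex}, so that averaging its argument can only weakly decrease its value.

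For the first inequality, the key is an algebraic identity relating $L[y]$ to $y$ evaluated at $x$ and $-x$. Writing $u = \phi_G[X]$ and using the componentwise ReLU identity $\sigma(u) - \sigma(-u) = u$ together with the linearity of $\phi'_G$ and $\phi_G$, I would show
\[
L[y](x) = \tfrac{1}{2}(\phi'_G \circ \phi_G)[X](i) = \tfrac{1}{2}\big(y(x) - y(-x)\big),
\]
since negating $x$ negates the features $X$ and hence replaces $\sigma(u)$ by $\sigma(-u)$. Then I would use class-symmetry, $P(F) = P(-F)$, to replace $C(y) = \E_{x \sim \Omega}[\ell(v(x) y(x))]$ by the symmetrized average $\tfrac{1}{2}\E_{x \sim \Omega}[\ell(v(x) y(x)) + \ell(v(-x) y(-x))]$. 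Because $v(-x) = -v(x)$, the second argument equals $-v(x) y(-x)$, and convexity of $\ell$ gives
\[
\tfrac{1}{2}\big(\ell(v(x) y(x)) + \ell(-v(x) y(-x))\big) \geq \ell\!\left(v(x)\cdot \tfrac{y(x) - y(-x)}{2}\right) = \ell(v(x) L[y](x)).
\]
Taking expectations yields $C(L[y]) \leq C(y)$.

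The second inequality follows the same template with reflection in place of negation. Using $P_S = \tfrac{1}{2}(I + R_S)$ and linearity, I would establish the companion identity
\[
P_S[L[y]](x) = \tfrac{1}{2}\big(L[y](x) + L[y](R_S(x))\big),
\]
where $R_S(x) = (G, i, v, R_S \circ X)$ leaves the class $v$ unchanged. Symmetry about $S$, $P(R_S(F)) = P(F)$, then lets me symmetrize $C(L[y])$ over $x$ and $R_S(x)$; since $v(R_S(x)) = v(x)$, both copies share the same class factor, and convexity of $\ell$ again collapses the average of the two arguments to their midpoint, giving $C(P_S[L[y]]) \leq C(L[y])$.

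The main obstacle is the bookkeeping around the two structural identities rather than the analysis. Verifying $L[y](x) = \tfrac{1}{2}(y(x) - y(-x))$ requires care that the ReLU identity is applied componentwise \emph{before} the second aggregation and that negating $x$ indeed negates $\phi_G[X]$; likewise, the reflection identity depends on $R_S$ acting linearly on the features and being applied to $X$ prior to the aggregators. The other delicate point is matching the sign of the class label under each symmetry—negation flips $v$ while reflection preserves it—so that the symmetrized arguments line up with the correct signs for Jensen to produce exactly $\ell(v(x) L[y](x))$ and $\ell(v(x) P_S[L[y]](x))$.
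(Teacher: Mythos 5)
Your proposal is correct and follows essentially the same route as the paper's proof: rewriting the cost as the expectation of the convex loss $\log(1+e^{-v(x)y(x)})$, proving the two structural identities $L[y](x)=\tfrac{1}{2}(y(x)-y(-x))$ and $P_S[L[y]](x)=\tfrac{1}{2}\left(L[y](x)+L[y](R_S(x))\right)$ via linearity and the ReLU identity, and then symmetrizing each expectation under the relevant measure-preserving map before applying Jensen's inequality, with the same attention to the fact that negation flips $v$ while reflection does not.
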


\begin{proof}
    See~\cref{appendix:2layer}. The main idea is to use the symmetries of the space together with the convexity of the objective to invoke Jensen's inequality.
\end{proof}

In light of the the preceding theorem, linear GCNs are optimal over the binary cSBM. Carefully analyzing the linear case, we obtain an explicit formula for the optimal accuracy of any GCN over cSBM data (with a remark afterward to explain the intuition behind several variables):
\begin{theorem}
    The linear model
    \[
    y(x) = K\sum_{j \in \mathcal{N}(x)} \sigma \left(\sum_{k \in \mathcal{N}(j)} X(k) \cdot m \right)
    \]
has accuracy
\[
\sum_{\nin, \nout, \nintwo, \nouttwo = 0}^\infty P(\nin, \nout, \nintwo, \nouttwo) \Phi\bigg(\psi\bigg(\sgn(K)\tfrac{\mu}{\sigma}, \nin, \nout, \nintwo, \nouttwo\bigg)\bigg)
\]
where $\Phi$ is the cdf of the standard normal distribution and the following definitions apply:
\begin{align*}
    &P(\nin, \nout, \nintwo, \nouttwo)\\
    &= p(\nin, \din)\cdot p(\nout, \dout)
    \cdot p(\nintwo, \din\nin + \dout\nout)\cdot p(\nouttwo, \dout\nin+\din\nout) \\
    &p(k,\lambda) = \frac{\lambda^k e^{-\lambda}}{k!}, \text{ and}\\
    &\psi(c, \nin, \nout, \nintwo, \nouttwo) = c\frac{1 + 3\nin - \nout + \nintwo - \nouttwo}{\sqrt{(\nin + \nout + 1)^2 + 4(\nin + \nout) + (\nintwo + \nouttwo)}}.
\end{align*}
\end{theorem}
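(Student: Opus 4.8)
The plan is to reduce the accuracy computation to a single Gaussian tail probability. The first observation is that the aggregation nonlinearity must be taken to be the identity: the theorem concerns \emph{the linear model}, and indeed if $\sigma$ were a genuine ReLU then $y(x)=K\sum_j\sigma(\cdots)$ would have sign equal to $\sgn(K)$ for every input and could not classify. Writing $s_k := X(k)\cdot m = \mu v_k + w_k$ with $w_k := z_k\cdot m \sim N(0,\sigma^2)$ i.i.d. (using $\|m\|=1$, where $\sigma$ is the feature-noise standard deviation appearing in $\mu/\sigma$), and letting $\hat A = A+I$ be the adjacency with self-loops as in the standard GCN, the pre-activation score is $y(x)=K\,(\hat A^2 s)_i = K\sum_k (\hat A^2)_{ik}\,s_k$. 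So conditioned on the graph, $T:=\sum_k (\hat A^2)_{ik}\,s_k$ is Gaussian, and the whole argument is (a) reading off the integer multiplicities $c_k:=(\hat A^2)_{ik}$ and (b) computing the conditional mean and variance of $T$.

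For (a) I would expand $\hat A^2 = A^2 + 2A + I$, so that $c_k = (A^2)_{ik} + 2A_{ik} + \delta_{ik}$. In the sparse regime ($p_{\mathrm{in}},p_{\mathrm{out}}=\Theta(1/n)$), where shared-second-neighbor and triangle counts vanish as $n\to\infty$, the contributing nodes split into three disjoint groups: the node $i$ itself, with multiplicity $c_i=(A^2)_{ii}+1=\deg(i)+1=\nin+\nout+1$; the one-hop neighbors, each of multiplicity $2$ (from the $2A$ term), of which $\nin$ lie in class $1$ and $\nout$ in class $2$; and the genuine two-hop endpoints, each of multiplicity $1$ (from $A^2$), of which $\nintwo$ lie in class $1$ and $\nouttwo$ in class $2$. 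Taking $v_i=1$ without loss of generality (by class-symmetry), summing $c_k v_k$ and $c_k^2$ over the three groups gives exactly $\E[T]=\mu(1+3\nin-\nout+\nintwo-\nouttwo)$ and $\Var[T]=\sigma^2\big((\nin+\nout+1)^2+4(\nin+\nout)+\nintwo+\nouttwo\big)$, i.e. the numerator and the under-root denominator of $\psi$.

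Since $T$ is conditionally Gaussian and the node is classified correctly iff $\sgn(K)\,T>0$, the conditional accuracy equals $\Phi\big(\sgn(K)\,\E[T]/\sqrt{\Var[T]}\big)=\Phi(\psi(\sgn(K)\tfrac{\mu}{\sigma},\nin,\nout,\nintwo,\nouttwo))$, with $\sgn(K)$ correctly handling $K<0$ and class $2$ yielding the identical expression by symmetry. It then remains to identify the law of $(\nin,\nout,\nintwo,\nouttwo)$. In the Poisson limit of the SBM the in- and out-degrees of $i$ are independent, $\nin\sim\mathrm{Poisson}(\din)$ and $\nout\sim\mathrm{Poisson}(\dout)$; conditioned on these, each class-$1$ one-hop neighbor contributes an independent $\mathrm{Poisson}(\din)$ of class-$1$ and $\mathrm{Poisson}(\dout)$ of class-$2$ second-neighbors (and each class-$2$ one-hop neighbor the reverse), so by Poisson superposition $\nintwo\sim\mathrm{Poisson}(\din\nin+\dout\nout)$ and $\nouttwo\sim\mathrm{Poisson}(\dout\nin+\din\nout)$. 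This yields the product-of-Poissons weight $P(\nin,\nout,\nintwo,\nouttwo)$, and averaging the conditional accuracy against it produces the stated sum.

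The routine part is the multiplicity bookkeeping via $\hat A^2=A^2+2A+I$, which also transparently explains every coefficient ($+1$ from the self-loop, the $3\nin-\nout$ and the $4(\nin+\nout)$ from the doubly-counted one-hop term, and the $\nintwo-\nouttwo$ from the two-hop term). The genuinely delicate step, where I expect the main obstacle, is justifying the asymptotic independence and Poisson form of $(\nin,\nout,\nintwo,\nouttwo)$: one must show that in the sparse limit the second neighborhoods of distinct one-hop neighbors are disjoint from one another and from $\{i\}\cup\mathcal N(i)$ with probability tending to $1$ (so the three groups do not overlap and all two-hop multiplicities are exactly $1$), and that the binomial second-neighbor counts converge to independent Poissons. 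Controlling these $O(1/n)$ corrections — triangles, shared second-neighbors, and the subtle distinction between a neighbor's degree counted with or without $i$ — is where the argument is most error-prone.
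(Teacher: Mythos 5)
Your proposal is correct and takes essentially the same route as the paper's proof: the same self-loop multiplicity bookkeeping (the paper counts 2-walks directly where you expand $\hat{A}^2 = A^2 + 2A + I$, giving the identical coefficients $\nin+\nout+1$, $2$, and $1$), the same conditional Gaussian mean/variance computation yielding $\Phi(\psi(\sgn(K)\tfrac{\mu}{\sigma},\cdot))$, and the same Poisson-limit factorization of $P(\nin,\nout,\nintwo,\nouttwo)$ via superposition. The only substantive difference is that you explicitly flag the sparse-regime issues (identity activation, disjointness of second neighborhoods, convergence to independent Poissons) that the paper simply asserts in its large-node limit.
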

\begin{proof}
    See~\cref{appendix:2layer}.
\end{proof}

\begin{remark}
    In the theorem, the indices $\nin$, $\nout$, $\nintwo$, and $\nouttwo$ refer to the number of distance 1 and 2 nodes with the same and the opposite class of the base node. The function $P$ represents the probability of the graph structure, while the rest of the formula is the accuracy of the given graph structure.
\end{remark}

\section{Empirical exploration of data regimes}

In~\cref{sec:experiment} and~\cref{sec:binary-gcn}, we present results from our simplest set of experiments in detail to illustrate the interplay between edges and features.
Then, in~\cref{sec:architecture_compare} we compare performance across each of the four architectures. 
Finally, we contrast how GNNs performed on degree-corrected and non-degree-corrected graphs in~\cref{sec:DC_GCN}. 

See also our full code online to extend this work to other architectures and parameter ranges:

\subsection{Experimental design}
    \label{sec:experiment}

To better understand how GNN architectures harness information embedded in the features or edges, we evaluated them across a variety of graphs.
Each of our architectures was comprised of one input layer, a hidden layer of size $16$ (with ReLU activation functions), and an output layer (with softmax). As baselines, we trained a feedforward neural network, with one hidden layer of size $16$, on the feature data. Our exploration also encompassed a variety of methods for feature-agnostic methods such as graph-tool~\citep{peixoto_graph-tool_2014}, Leidenalg (python package), Louvian (python package), and Spectral clustering~\citep{scikit-learn}. In doing so we found that spectral clustering worked the best for assortative graphs (edge information from [0,3]) and graphtool performed the best on dissasortative graphs (edge information from [-3,0)). 

We generated graph data using a cSBM with average degree $d = 10$; the number of nodes $n = 1,000$; the number of features $m_{\text{feat}}=10$; the number of classes $c=2$; and standard deviation of the Gaussian clouds $.2$. These hyperparameters were selected to be representative of a large variety of datasets without being too computationally expensive (specifically when using transformers). We observed that $1,000$ nodes was large enough to get statistical regularity and that using larger graphs (up to 40,000 nodes) didn't introduce major deviations. With these hyperparameters, we vary $\lambda$ (edge separation in cSBMs) between $-3$ and $3$ and vary feature separation (cloud distance from origin) from $0$ to $2$ to obtain $121\times200$ (how finely we discretized the interval) possible sets of graph data. This data ranges from being highly disassortative to highly assortative.

To train each architecture, we used an Adam optimizer (PyTorch) with a learning rate of $0.01$ for $400$ epochs (typically where the model ceased improving). 
We evaluated the final accuracy on a separate graph, with the same graph parameters to prevent overfitting. 

In addition to the class count of two, we ran the architectures across class counts of three, five, and seven each with both a degree-corrected case and a binomial case. As each test was averaged/maxed over $10$ trials, the number of tests totals $320$ different tests with $15,488,000$ accuracy scores generated (more than .25 petaflops used in total).

\subsection{Example: Binary node classification with graph transformer}
    \label{sec:binary-gcn}

\begin{figure}[htbp]
    \begin{center}
    \includegraphics[width=9\textwidth/10]{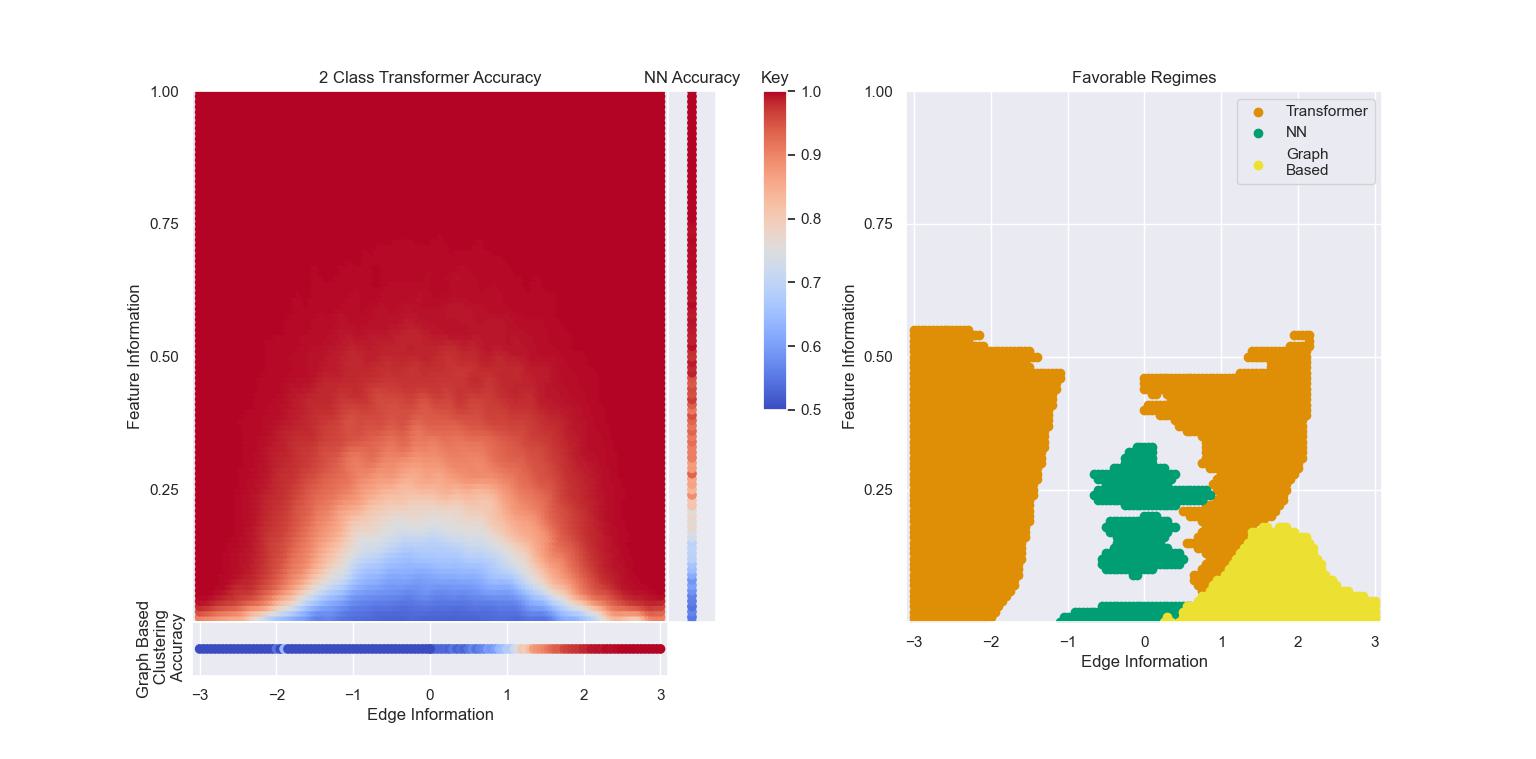}
    \caption{(Left) Transformer's performance on a two-class non-degree-corrected cSBM, with color gradients indicating accuracy levels. To the right and below, performance curves for the feedforward neural network (graph-blind) and graph-based (feature-blind) methodologies are displayed respectively. All reported scores represent the maximum of 10 independent trials, and a $5\times 5$ convolutional filter is applied to the visualization for enhanced clarity. (Right) A comparison of the top-performing model among the Graph Transformer, feedforward neural network, and graph-based clustering. White space indicates where one model was not consistently better than the others. 
    The Transformer predominantly excels when edge and feature information were moderately noisy. The graph based method is able to surpass the transformer if we have a combination of high feature noise and low edge noise. 
    }
    \label{fig:2class-transformer}
    \end{center}
\end{figure}

Our experiments with the Transformer architecture elucidate its robustness across a wide parameter space (see~\cref{fig:2class-transformer}). Remarkably, the Transformer consistently delivers superior performance across most scenarios, with exceptions only in cases where both the feature and edge information are heavily compromised by noise. An intriguing capability of the Transformer is its potential to achieve flawless accuracy even when presented with solely noisy edge information. This implies an innate adaptability within the Transformer to sift through the noise, selectively emphasizing pertinent features over less informative edges. Message-passing GNNs seem to struggle with this~\citep{bechlerspeicher2023graph} as seen in~\cref{fig:architecture_compare}.

The Transformer performs well on heterophilous graphs as well. Such proficiency makes the Transformer an excellent candidate for tasks demanding the assimilation of diverse or opposing sets of information. A marked limitation is observed in the Transformer's ability to process noisy feature scenarios, where spectral clustering performs better. The Transformer's somewhat dependent relationship with feature information, even when suboptimal, necessitates further investigation.

\subsection{Performance of GCN, GAT, SAGE, and Transformer architectures}
    \label{sec:architecture_compare}
    
\begin{figure}[htbp]
    \includegraphics[width=\textwidth]{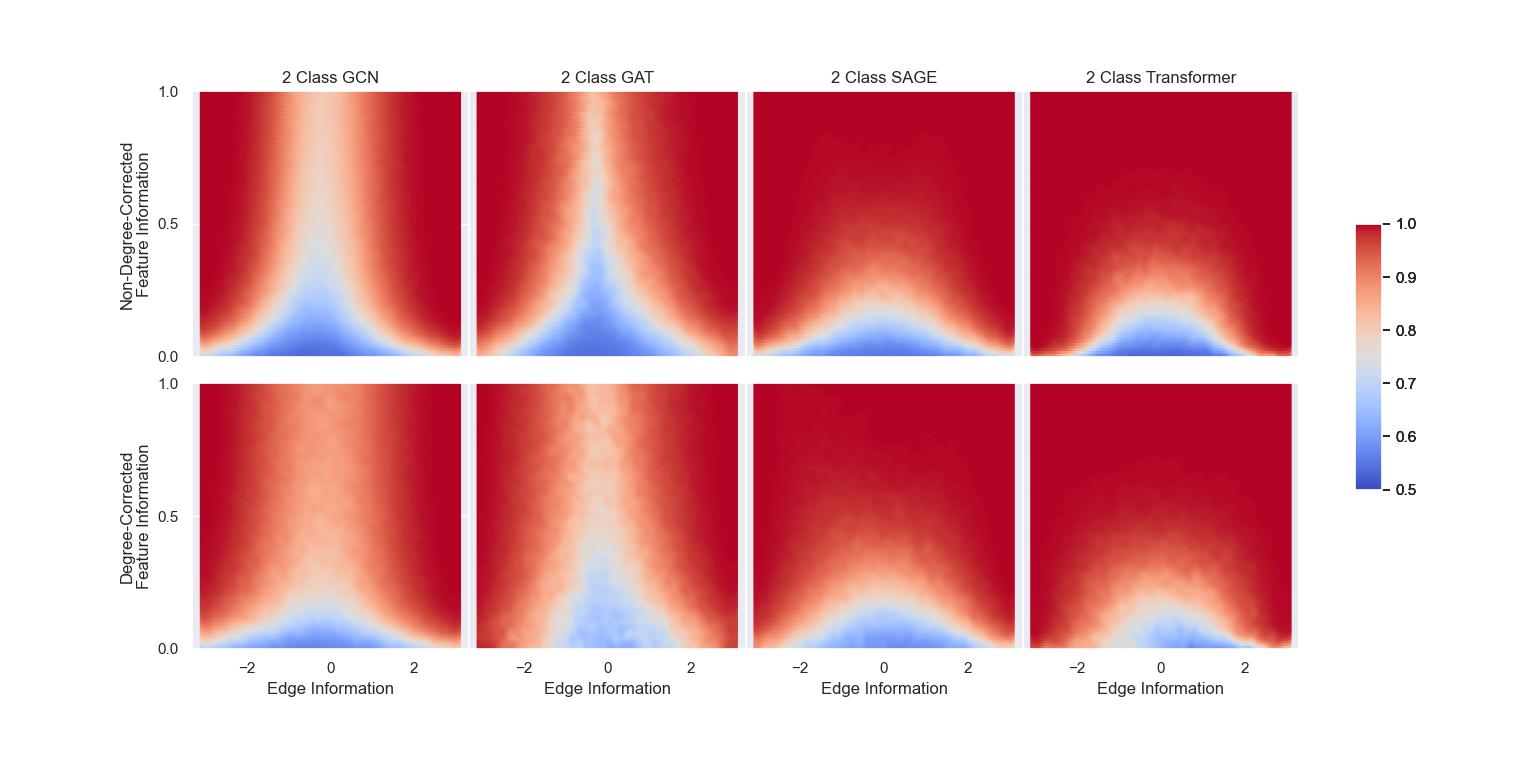}
    \caption{Comparison performance on non-degree-corrected and degree-corrected SBMs for GCN, GAT, SAGE and Transformer architectures. Notice the GCN and GAT consistently perform worse when the edge information is roughly zero, but the other two models are able to still achieve perfect accuracy given enough feature information. This could be due to SAGE and Transformer being able to learn a more global context for each node. 
    In this regime we see that almost all of the models did better on the heavy tailed graphs. GCN achieved higher accuracy on such graphs when the edges were just noise. The accuracy of the GAT improved as well in the regime of very noisy edges and features. All values are the best of 10 trials, with a $5\times 5$ convolutional filter applied for visual clarity.}
    \label{fig:architecture_compare}
\end{figure}

We now juxtapose the performances of four distinct architectures, particularly considering the influence of heavy-tailed degree distributions. Refer to \cref{fig:architecture_compare} for insights on the two-class scenario, while an exhaustive analysis is cataloged in \cref{appendix:means} and \cref{appendix:maxes}. Generally, both Graph-Transformer and SAGE stand out for their resistance to edge and feature noise, demonstrating their robustness in noisy regimes. In a two-class, non-degree-corrected cSBM setting, SAGE and Graph-Transformer consistently outperform the other two models, GAT and GCN. This is shown by their strong resistance to feature noise and their ability to classify accurately even without edge information. Such performance highlights SAGE's use of global information from random walks and graph embeddings, while the Transformer simply ignores the graph embedding.

Each architecture performs differently, as shown by their varying weak areas (seen as blue areas in \cref{fig:architecture_compare}) and how they compare to neural network and spectral clustering benchmarks (detailed in \cref{appendix:maxes}). The GAT and GCNs weak area is especially prominent with no edge information, showing it relies heavily on clear features. Interestingly, both Transformer and GAT perform better with degree correction, especially in heterophilous settings. 

\subsection{Degree-corrected SBMs}
    \label{sec:DC_GCN}

We found that all models performed better on scale-free graphs. We believe this occurs due to a filtering out of bad neighbors. Most nodes in the heavy-tailed data have relatively few neighbors, this allows for fewer confusing neighbors to contribute misleading information in the aggregation step than in the binomial degree distribution.
This is similar to ideas from~\citet{albert00}. 

The scale free graphs affected the models in different ways, for example the performance of SAGE only improved in the higher signal edge regimes (right and left sides of the \cref{fig:architecture_compare}). The performance of GAT increased dramatically in the case of very noisy edges and features. This is likely because GAT was already pruning bad edges, so perhaps degree correction gave it more information on what edges to prune. Interestingly, the attention based models, the Transformer and GAT, saw a stark increase in performance in the heterophilous clustering, suggesting that self-attention allows for a better interpretation of such graphs.

\section{Effect of higher-order structure in real world datasets}
    \label{sec:real_world}

\begin{figure}[htbp]
    \includegraphics[width=1\textwidth]{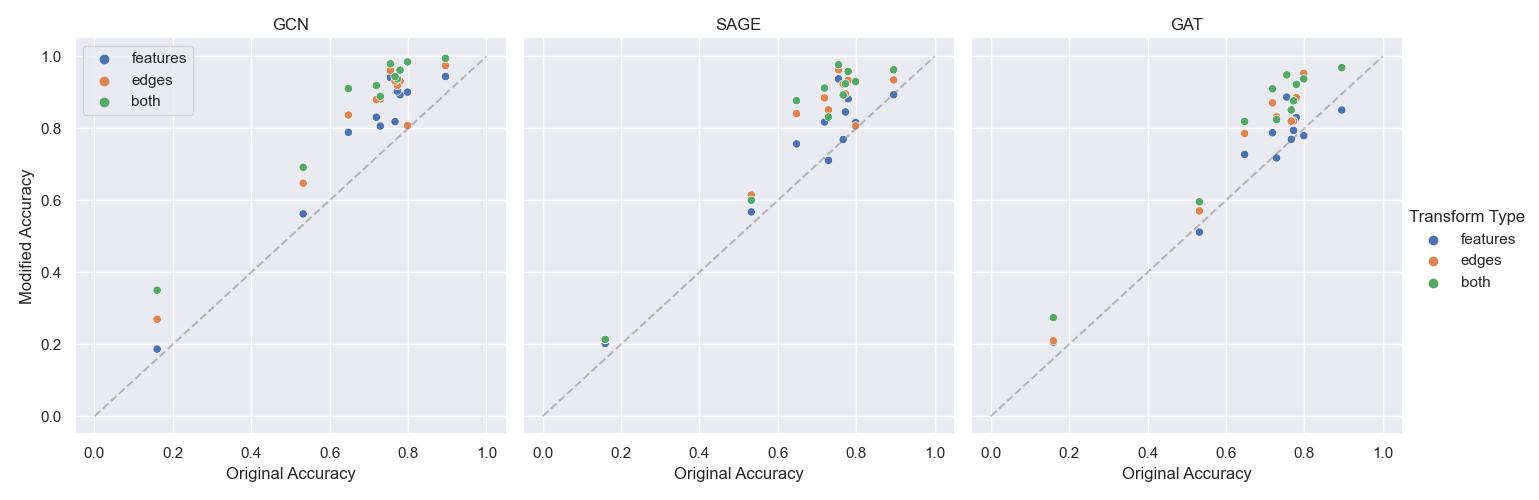}
    \caption{Comparison of model accuracies on real data compared to performance on matched synthetic data (with the same degree and clustering structure but without any higher-order structure). Each point represents one dataset. The accuracy tends to improve when we erase higher-order structure in the data. The datasets from left to right are: Flickr, DeezerEurope, Citeseer, LastFMAsia, DBLP, FacebookPagePage, Pubmed, GitHub, Cora, Amazon Computers, and Amazon Photos. The figure depicts cases where we transform only the edges, only the features, and both. The transformer was not run due to memory requirements.}
    \label{fig:real-compare}
\end{figure}

The experiments to be described in this section support the claim that higher-order structure, such as clustering or motifs, influence the performance of GNN architectures. In order to understand the impact of higher-order structure we devised a way to compare performance on real data (with natural higher-order structure) and closely matched synthetic data (with no higher-order structure).

We found that the models generally performed better on matched synthetic data than on real data, suggesting that the higher-order structure that was erased is an impediment to GNN learning (see~\cref{fig:real-compare}).

To make the synthetic data for each data set, we transformed the edge and feature data as if each dataset were already a degree-corrected cSBM. In particular, the edge data was randomized by rewiring every edge in a way that precisely preserved the empirical degree distribution and the number of edges both between and within all classes. This is a slight extension of the ideas in~\citet{fosdick_16}. In some experiments, the node features were also transformed by calculating the mean and standard deviation of the empirical features within each class, then sample independently from a Gaussian distribution to generate new node-level features that replace the empirical ones. Thus, the synthetic data  lacks nontrivial structure, except the degree distribution, intra/inter-class linkage frequency, and feature means and standard deviations match the corresponding empirical network.

We record the performance of a GCN on both the structured and unstructured version of 11 real world datasets. We use the following datasets obtained through pytorch geometric \citep{Fey/Lenssen/2019}: Flickr~\citep{zeng_graphsaint_2020}, DeezerEurope\citep{rozemberczki_characteristic_2020}, Citeseer~\citep{bojchevski_deep_2018}, LastFMAsia~\citep{rozemberczki_characteristic_2020}, DBLP~\citep{bojchevski_deep_2018}, FacebookPagePage~\citep{rozemberczki_multi-scale_2021}, Pubmed~\citep{bojchevski_deep_2018}, GitHub~\citep{rozemberczki_multi-scale_2021}, Cora~\citep{bojchevski_deep_2018}, Amazon Computers~\citep{shchur_pitfalls_2019},and Amazon Photos~\citep{shchur_pitfalls_2019}. We used the default train-test splits for training and evaluation. and restructured the data prior to training. Each dataset was trained using a 2 layer GCN over 200 epochs.

We see a positive impact on the accuracy of the GCN when removing the higher-order structure (see~\cref{fig:real-compare}) specifically with edge structure. When we scramble the edge data, we remove all statistically consistent structure within the graph except degree and community structure. Hence, structure like hierarchical clustering or locality bias are completely erased. The fact that the GNNs do better on this semi-randomized data suggests that they may perform optimally on SBM-like data, but are negatively impacted by the additional structure present in real data.

Uncovering why such structure can be detrimental to these GNNs is a significant opportunity for future work.

To further verify that we are not confusing higher-order structure with label noise, we verified these results on synthetic data with controlled structure. Such results indicate that GNNs perform worse on datasets with spatial structure, but are unaffected by local motifs such as triadic closure. Results on graphs with planted hierarchical structure were mixed but largely favored SBM data. A more detailed analysis can be found in \cref{appendix:higher_order}.

\section{Reproducibility Statement}
For further explanation of various proofs explored in \cref{sec:theory}, see \cref{sec:1layer_appendix} and \cref{appendix:2layer}. For code implementations of our studies in \cref{sec:architecture_compare} and \cref{sec:real_world}, see our GitHub or the supplementary material. For the exact implementation of \cref{sec:architecture_compare}, view the hyperparameters discussed in \cref{sec:experiment}. In regards to our findings in \cref{sec:real_world}, view \cref{appendix:higher_order} for a more in-depth explanation.
\section*{Acknowledgements}

We would like to thank the Office of Research Computing at BYU for providing the computing power necessary for this project. Likewise we would like to thank the ZiRui Su, Ryan Wood, Dustin Angerhoffer, Eli Child, and Carson Watkin for helpful discussions. We would also like to thank our the US National Science Foundation for support through award 2137511 and the US Army Research Lab for support through award W911NF1810244.

\bibliography{iclr2024_conference}
\bibliographystyle{iclr2024_conference}

\appendix

\section{Analysis of one-layer GCN}
\label{sec:1layer_appendix}

In this appendix, we prove the three parts of~\cref{thm:1layer}. 

\subsection{Distribution of linear embeddings}
\label{embedding_pf}

Analyzing the linear part of the model gives
\[ (AXW)_i = \sum_{j\in \mathcal{N}(i)} {X(j) W}.\]
From here, we split the sum into two parts corresponding to the two possible classes of neighbors:
\[\sum_{\substack{j \in \mathcal{N}(i) \\ j \text{ in class 1}}} {X(j) W} + 
    \sum_{\substack{j \in \mathcal{N}(i) \\ j \text{ in class 2}}} {X(j) W}\]
We then substitute the known expressions for $X(j)$:
\[\sum_{\substack{j \in \mathcal{N}(i) \\ j \text{ in class 1}}} {\left(\mu m + z_j\right)W} + 
    \sum_{\substack{j \in \mathcal{N}(i) \\ j \text{ in class 2}}} {\left(-\mu m + z_j\right)W}.\]
This becomes
\[(AXW)_i = \underbrace{\mu(\nin - \nout)mW}_\text{neighborhood signal} + \underbrace{\left(\sum_{j \in \mathcal{N}(i)} {z_j}\right) W}_\text{noise}. \]

\subsection{Nodewise accuracy, conditioned on the graph structure}
\label{sec:1layer_prob}

Assume $W\ne 0$. If $\nin=\nout=0$ then we have an isolated point. Since we are assuming no self loops and have no bias, these nodes do not affect the optimal parameters (in particular, the convolution outputs zero for these nodes). Thus we can assume that each node has at least one edge. We then compute,
\begin{align}
P(y[X](i) = 1  \mid \nin,\nout, v_i = 1) &= \int_0^\infty \frac{1}{\sqrt{2\pi W^T W(\nin + \nout)}} e^{-\frac{1}{2}\left(\frac{x-(\mu(\nin-\nout) mW)}{\sqrt{W^T W(\nin+\nout)}}\right)^2}\,dx
\end{align}
We now fix,
\begin{align}
u=\frac{x-(\mu(\nin-\nout) mW)}{\sqrt{2W^T W(\nin+\nout})} \ \text{with} \ du=\frac{dx}{\sqrt{2W^T W(\nin+\nout)}}.
\end{align}

 Notice $\sqrt{2W^T W(\nin+\nout)} > 0$ since each node has at least one edge and $W\ne 0$. We have, 
\[\frac{\sqrt{2W^T W(\nin+\nout)}}{\sqrt{2\pi W^T W(\nin+\nout)}}\int_{\frac{-(\mu(\nin-\nout) mW)}{\sqrt{2W^T W(\nin+\nout)}}}^{\infty}e^{-u^{2}} \ du \]
\[=\frac{1}{2} \int_{\frac{-(\mu(\nin-\nout) mW)}{\sqrt{2W^T W(\nin+\nout)}}}^{\infty}\frac{2e^{-u^{2}}}{\sqrt{\pi}} \ du= \frac{1}{2}\mathrm{erf}(u) \Big|_{\frac{-(\mu(\nin-\nout) mW)}{\sqrt{2W^T W(\nin+\nout)}}}^{\infty}\]
Observe that $\lim_{u \to \infty} \mathrm{erf}(u)=1$, so
\[\frac{1}{2}\left(\lim_{u \to \infty} \mathrm{erf}(u)-\mathrm{erf}\left(\frac{-(\mu(\nin-\nout) mW)}{\sqrt{2W^T W(\nin+\nout)}}\right) \right)\] \[= \frac{1}{2}\left(\mathrm{erf}\left(\frac{\mu(\nin-\nout) mW}{\sqrt{2W^T W(\nin+\nout)}}\right)+1\right).\]

Thus we have that \[P(y[X](i) = 1  \mid \nin,\nout, v_i = 1) = \frac{1}{2}\left(\mathrm{erf}\left(\frac{\mu(\nin-\nout) mW}{\sqrt{2W^T W(\nin+\nout)}}\right)+1\right),\]
as promised.

\subsection{Maximizing accuracy}
\label{sec:1layer_max}

Given the symmetry of the linear model,
\[
P(y_i = v_i) = P(y_i = v_i | v_i = 1)
\]
Let $\Pin(\nin)$ be the probability of having $\nin$ homophilous edges, and $\Pout(\nout)$ be the probability of having $\nout$ heterophilous edges. Since $\Pin(\nin)$ and $\Pout(\nout)$ are independent we have,
\begin{align}
P(y_i = v_i | v_i = 1) &= P((AXW)_i>0 \mid v_i = 1)\\
&=\sum\limits_{\substack{\nin=0}}^\frac{N}{2} \sum\limits_{\substack{\nout=0}}^{\frac{N}{2}} P((AXW)_i>0\mid \nin, \nout)\Pin(\nin)\Pout(\nout).
\end{align}

Recall that $\theta$ is the angle between $W$ and $m$ and that consequently $\cos\theta = mW/\sqrt{W^TW}$. To find the maximizers, we now differentiate each term $P((AXW)_i>0\mid \nin, \nout)$ with respect to $\theta$ and set it equal to $0$.

\begin{align} 
\frac{d}{d\theta}\left(\frac{1}{2}\left(\mathrm{erf}\left(\frac{\mu(\nin-\nout)}{\sqrt{2(\nin+\nout)}}\cos(\theta)\right)+1\right)\right)&=0 \\
-\frac{\mu(\nin-\nout)}{\sqrt{2(\nin+\nout)}}\frac{1}{2}\sin(\theta)\left(\mathrm{erf}^{\prime}\left(\frac{\mu(\nin-\nout)}{\sqrt{2(\nin+\nout)}}\cos(\theta)\right)\right)&=0,\\
-\frac{\mu(\nin-\nout)}{\sqrt{2(\nin+\nout)}}\frac{1}{2}\sin(\theta)\frac{2}{\sqrt{\pi}}e^{-\left(\frac{\mu(\nin-\nout)}{\sqrt{2(\nin+\nout)}}\cos(\theta)\right)^{2}}&=0
\end{align}
where $\mathrm{erf}^{\prime}(x)=\frac{2}{\sqrt{\pi}}e^{-x^{2}}$. This is equal to $0$ exactly when $\theta=0$ and $\theta=\pi$. 

It turns out these are the only two critical values.
To demonstrate this we reintroduce our summations and re-index $\alpha=\nin+\nout$ and $\beta = \nin-\nout$. Define $U(\alpha)=\frac{N}{2}-\abs{\alpha- \frac{N}{2}}$:
\begin{align}
\label{eq:sum}
    \sum\limits_{\substack{\alpha=0}}^N \sum\limits_{\substack{\beta=-U(\alpha)}}^{U(\alpha)}-\frac{\mu\beta}{\sqrt{2\alpha\pi}}\sin(\theta)e^{-\frac{\mu^2\beta^2}{2\alpha}\cos^2(\theta)}\Pin(\nin(\alpha,\beta))\Pout(\nout(\alpha,\beta))\\
    =\frac{-\mu\sin(\theta)}{\sqrt{2\pi}}\sum\limits_{\substack{\alpha=0}}^N \sum\limits_{\substack{\beta=-U(\alpha)}}^{U(\alpha)}\frac{\beta}{\sqrt{\alpha}}e^{-\frac{\mu^2\beta^2}{2\alpha}\cos^2(\theta)}\Pin(\nin(\alpha,\beta))\Pout(\nout(\alpha,\beta))
\end{align}
by pairing off entries whose absolute value of beta are equal we have:
\begin{align}
    =\frac{-\mu\sin(\theta)}{\sqrt{2\pi}}\sum\limits_{\substack{\alpha=0}}^N \sum\limits_{\substack{\beta=1}}^{U(\alpha)}c(\alpha,\beta,\theta)\left(\Pin(\nin(\alpha,\beta))\Pout(\nout(\alpha,\beta))-\Pin(\nout(\alpha,\beta))\Pout(\nin(\alpha,\beta))\right)
\end{align}
with $c(\alpha,\beta,\theta)=\frac{\beta}{\sqrt{\alpha}}e^{-\frac{\mu^2\beta^2}{2\alpha}\cos^2(\theta)}$. From here note that in the case of homophily:
\begin{align}
    \Pin(\nin(\alpha,\beta))\Pout(\nout(\alpha,\beta))-\Pin(\nout(\alpha,\beta))\Pout(\nin(\alpha,\beta)) > 0
\end{align}
and heterophily:
\begin{align}
    \Pin(\nin(\alpha,\beta))\Pout(\nout(\alpha,\beta))-\Pin(\nout(\alpha,\beta))\Pout(\nin(\alpha,\beta)) < 0.
\end{align}
To see this, note that 
\[\Pin(\nin) \Pout(\nout) \propto \binom{\frac{N}{2}}{\nin} \binom{\frac{N}{2}}{\nout} \pin^\nin \pout^\nout.\]
Similarly, 
\[\Pin(\nout) \Pout(\nin) \propto \binom{\frac{N}{2}}{\nin} \binom{\frac{N}{2}}{\nout} \pin^\nout \pout^\nin.\]
Subtracting yields
\begin{align}
&\Pin(\nin) \Pout(\nout) - \Pin(\nout) \Pout(\nin) \\
&\propto \binom{\frac{N}{2}}{\nin} \binom{\frac{N}{2}}{\nout} \left( {\pin}^{\nin} \pout^\nout - \pin^\nout \pout^\nin\right) \\
&= \binom{\frac{N}{2}}{\nin} \binom{\frac{N}{2}}{\nout} \pin^\nout \pout^\nin \left( \left(\frac{\pin}{\pout}\right)^{\nin - \nout} - 1 \right).
\label{thing1}
\end{align}
Since $\nin \ge \nout$ (because $\beta \ge 0$), \cref{thing1} is positive in the heterophilous case and negative otherwise (unless $\nin = \nout$, of course). If $\Pin = \Pout$, we make no claims.

In any case, the first derivative is not equal to zero unless $\theta\in\{0,\pi\}$.

Notice if we include self-loops, the analysis case is very similar, with the caveat that there may rarely be another critical point in the very dense heterophilous case, due to the possibility of $\nin = \frac{N}{2}$.

Thus the critical points are $0$ and $\pi$.

We now take the second derivative with respect to $\theta$ to classify the critical points. For clarity we set $h=\frac{\mu(\nin-\nout)}{\sqrt{2(\nin+\nout)}}$.
Again, proceeding term by term gives

\begin{align}
&\frac{-h}{\sqrt{\pi}}\frac{d}{d\theta}\left(\sin(\theta)e^{-h^2\cos^2\theta}\right) \\
&=\frac{-2h^3}{\sqrt{\pi}}\sin(\theta)^{2}\cos(\theta) e^{-h^2\cos^2\theta}
-\frac{h}{\sqrt{\pi}}\cos(\theta)e^{-h^2\cos^2\theta}.
\end{align}
Since $\sin(\theta) = 0$ at both critical points and $\cos(\theta)=\pm 1$, this simplifies to
\begin{align}
=-\frac{h}{\sqrt{\pi}}\cos(\theta)e^{-h^2}.
\end{align}
We now reintroduce the summations and reindex. Once again fixing $\alpha=\nin+\nout$ and $\beta=\nin-\nout$. Let $U(\alpha)$ be defined as above. The second derivative is then
\begin{align}
&\frac{-\mu}{\sqrt{2\pi}}\cos(\theta)\sum\limits_{\substack{\alpha}=0}^N\sum\limits_{\substack{\beta=-U(\alpha)}}^{U(\alpha)}\frac{\beta}{\sqrt{\alpha}} e^{-\frac{\mu^{2}\beta^{2}}{2\alpha}} P(\nin(\alpha,\beta))P(\nout(\alpha,\beta)) \\ 
&=\frac{-\mu}{\sqrt{2\pi}}\cos(\theta)\sum\limits_{\substack{\alpha}=0}^N\sum\limits_{\substack{\beta=1}}^{U(\alpha)}
    \frac{\beta}{\sqrt{\alpha}} e^{-\frac{\mu^{2}\beta^{2}}{2\alpha}} \left(P_{\mathrm{in}}(\nin(\alpha,\beta)P_{\mathrm{out}}(\nout(\alpha,\beta)) - P_{\mathrm{in}}(\nout(\alpha,\beta))P_{\mathrm{out}}(\nin(\alpha,\beta))\right)
\end{align}
Similar to the analysis with the first derivative, the second term in the innermost sum is always less than the first (assuming homophily here), we that the second derivative must be positive at $\pi$ and negative at $0$, as expected. In the heterophilous case, the opposite sign rules apply.

Thus in the homophilous case the maximal accuracy is obtained when $\theta=0$ or our weight matrix is pointing in the same direction as our average feature vector. The minimal accuracy is obtained with $\theta=\pi$. For heterophily reversed rules apply.
%
%
%

\section{Analysis of two-layer GNNs}
\label{appendix:2layer}

\subsection{Proof that linear models are optimal in certain cases}

Let $\Omega$ be a 2-class attributed random graph model. For any $x = (G, i, v, X) \in \Omega$, we defined earlier the negation
\[
-x = (G, i, -v, -X)
\]
and the reflection
\[
R_S(x) = (G, i, v, R_S \circ X)
\]
where $S$ is some subspace of $\R^m$. We also define $P_\perp = I - P_S$ or the projection onto the subspace orthogonal to $S$. 

\begin{lemma}
    For any model $y$ on $\Omega$, the cost function is given by:
    \[
    C(y) = \mathbb{E}_{x \sim \Omega} \left[ \log\left(1 + e^{-v(x)y(x)}\right)\right]
    \]
\end{lemma}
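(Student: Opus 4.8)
The plan is to prove the claimed identity pointwise in $x$ and then integrate. Unfolding the given definition, $C(y) = \mathbb{E}_{x \sim \Omega}[-\log P(v(x) : y(x))]$, so by linearity of the expectation it suffices to show that for every $x \in \Omega$,
\[
-\log P(v(x) : y(x)) = \log\left(1 + e^{-v(x)y(x)}\right).
\]
Since $v(x) \in \{-1, 1\}$ always, I would split into these two exhaustive cases and verify the equation in each.

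The one algebraic fact I would invoke is the logistic reflection identity $1 - \sigma_s(z) = \sigma_s(-z)$, which is immediate from $\sigma_s(z) = 1/(1+e^{-z})$. In the case $v(x) = 1$, the piecewise definition of $P$ gives $P(v(x):y(x)) = \sigma_s(y(x))$, so $-\log P = -\log \tfrac{1}{1+e^{-y(x)}} = \log(1 + e^{-y(x)})$, which is exactly $\log(1 + e^{-v(x)y(x)})$ because $v(x) = 1$. In the case $v(x) = -1$, the definition gives $P = 1 - \sigma_s(y(x))$, which the reflection identity rewrites as $\sigma_s(-y(x))$; hence $-\log P = \log(1 + e^{y(x)}) = \log(1 + e^{-v(x)y(x)})$ since $v(x) = -1$. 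With both cases matching, the integrands agree pointwise, and taking expectations yields the stated formula.

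I expect no genuine obstacle here: the lemma is essentially a re-encoding of the maximum-likelihood cross-entropy in the $\pm 1$ label convention, and its entire content is the one-line reflection identity for $\sigma_s$. The only point requiring care is confirming that the sign convention $v(x) \in \{-1,1\}$ (rather than $\{0,1\}$) is precisely what collapses both branches into the single clean exponent $-v(x)y(x)$; this is worth stating explicitly because the subsequent use of this lemma in \cref{appendix:2layer} relies on the symmetric form to exploit convexity and the negation symmetry of $\Omega$.
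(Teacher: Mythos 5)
Your proof is correct and matches the paper's argument essentially step for step: both hinge on the reflection identity $1-\sigma_s(z)=\sigma_s(-z)$ to collapse the two-branch definition of $P(v(x):y(x))$ into $\sigma_s(v(x)y(x))$, and then apply $-\log\sigma_s(z)=\log(1+e^{-z})$. Your explicit case split on $v(x)\in\{-1,1\}$ is just the unpacked form of the paper's one-line unification, so the two proofs are the same in substance.
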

\begin{proof}
    By definition,
    \[C(y) = 
    \mathbb{E}_{x \sim \Omega} [-\log P(v(x) : y(x))]
    \]
    where
    \[
    P(v(x) : y(x)) = \left\{\begin{array}{lr}
\sigma_s(y(x)) & v(x) = 1\\
1- \sigma_s(y(x)) & v(x) = -1
\end{array} \right.
    \]
    and $\sigma_s(z) = (1 + e^{-z})^{-1}$. Famously the sigmoid function satisfies $1 - \sigma_s(z) = \sigma_s(-z)$. We can then re-write the probability as
    \[
    P(v(x) : y(x)) = \sigma_s(v(x)y(x))
    \]
    Using the additionally identity $-\log \sigma_s(z) = \log(1 + e^{-z})$ we obtain,
    \[
    C(y) = \mathbb{E}_{x \sim \Omega} \left[-\log \sigma_s(v(x)y(x))\right] = \mathbb{E}_{x \sim \Omega} \left[ \log\left(1 + e^{-v(x)y(x)}\right)\right]
    \]
\end{proof}

\begin{lemma}
    The function $f(x) = \log(1 + e^{-x})$ is convex.
\end{lemma}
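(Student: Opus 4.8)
The plan is to establish convexity via the second-derivative test, since $f$ is smooth on all of $\R$ and a twice-differentiable function is convex precisely when its second derivative is nonnegative everywhere. First I would compute the first derivative; by the chain rule,
\[
f'(x) = \frac{-e^{-x}}{1 + e^{-x}},
\]
which I would then simplify by multiplying numerator and denominator by $e^x$ to get $f'(x) = -1/(e^x + 1)$. It is worth noting that this equals $\sigma_s(x) - 1$, where $\sigma_s$ is the sigmoid defined earlier in the excerpt, since $\sigma_s(x) - 1 = \frac{1}{1+e^{-x}} - 1 = \frac{-e^{-x}}{1+e^{-x}}$. This ties the computation directly to objects already in play.

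Next I would differentiate once more. Recognizing $f'(x) = \sigma_s(x) - 1$, the second derivative is simply $f''(x) = \sigma_s'(x)$, and using the familiar identity $\sigma_s'(x) = \sigma_s(x)(1 - \sigma_s(x))$ gives
\[
f''(x) = \sigma_s(x)\bigl(1 - \sigma_s(x)\bigr).
\]
Since $\sigma_s(x) \in (0,1)$ for every real $x$, both factors are strictly positive, so $f''(x) > 0$ everywhere. This yields convexity (in fact strict convexity) of $f$, which is all the lemma requires. Equivalently, one may write $f''(x) = e^{-x}/(1 + e^{-x})^2$ and observe directly that the expression is positive.

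I do not anticipate any genuine obstacle here: the result is a routine calculus fact, and the only points requiring minor care are confirming that the second derivative is strictly positive on the entire real line and correctly handling the chain rule on the composition $\log(1 + e^{-x})$. As an alternative that avoids differentiation entirely, I could observe that $f(x) = \log(e^0 + e^{-x})$ is the composition of the log-sum-exp function with the affine map $x \mapsto (0, -x)$; since log-sum-exp is convex and convexity is preserved under precomposition with an affine map, convexity of $f$ follows immediately. I would nonetheless prefer the direct second-derivative argument, as it is shorter and entirely self-contained given the sigmoid identities already introduced, and it pairs naturally with the preceding lemma's rewriting of the cost in terms of $f$.
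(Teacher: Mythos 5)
Your proof is correct and takes essentially the same route as the paper: both compute the second derivative, $f''(x) = e^{x}/(1+e^{x})^{2}$ (equivalently $\sigma_s(x)(1-\sigma_s(x))$ in your sigmoid form), and conclude convexity from its positivity. The sigmoid identities and the log-sum-exp alternative you mention are fine embellishments but not needed; the paper's one-line second-derivative computation is the whole argument.
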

\begin{proof}
    It suffices to take the second derivative:
    \[
    f''(x) = \frac{e^x}{(1+e^x)^2} > 0.
    \]
\end{proof}

\begin{lemma}
    Let $y$ be any model on $\Omega$. If $\Omega$ is class-symmetric about the origin, then following inequality holds:
    \[
    C(y) \geq \mathbb{E}_{x \sim \Omega} \left[ \log\left(1 + e^{-v(x)\frac{y(x) - y(-x)}{2}}\right)\right]
    \]
    If $\Omega$ is symmetric about the subspace $S$, then
    \[
    C(y) \geq \mathbb{E}_{x \sim \Omega} \left[ \log\left(1 + e^{-v(x)\frac{y(x) + y\left(R_S(x)\right)}{2}}\right)\right]
    \]
\end{lemma}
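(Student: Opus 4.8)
The plan is to combine the three preceding lemmas. By the first lemma, rewrite the cost as $C(y) = \mathbb{E}_{x \sim \Omega}[f(v(x)y(x))]$ where $f(t) = \log(1 + e^{-t})$, which is convex by the second lemma. The strategy is then to produce a \emph{second}, equivalent expression for $C(y)$ by applying the relevant symmetry as a measure-preserving change of variables, average the two representations, and finally invoke Jensen's inequality through the convexity of $f$.

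For the first inequality, I would use that class-symmetry about the origin makes the law of $\Omega$ invariant under $x \mapsto -x$, so that $\mathbb{E}_x[g(x)] = \mathbb{E}_x[g(-x)]$ for every integrable $g$. Taking $g(x) = f(v(x)y(x))$ and noting that negation flips the class, $v(-x) = -v(x)$, yields the second representation $C(y) = \mathbb{E}_x[f(-v(x)y(-x))]$. Averaging this with the original representation gives $C(y) = \mathbb{E}_x[\tfrac{1}{2}f(v(x)y(x)) + \tfrac{1}{2}f(-v(x)y(-x))]$, and the pointwise Jensen bound $\tfrac{1}{2}f(a) + \tfrac{1}{2}f(b) \geq f(\tfrac{a+b}{2})$ applied with $a = v(x)y(x)$ and $b = -v(x)y(-x)$ produces exactly $f(v(x)\tfrac{y(x)-y(-x)}{2})$ inside the expectation, which unwinds to the claimed bound.

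The second inequality follows the identical template, but now symmetry about $S$ (together with $R_S$ being an involution, so the hypothesis $P(R_S(F)) = P(F)$ gives genuine invariance of the law under $x \mapsto R_S(x)$) and, critically, $R_S$ \emph{preserves} the class, $v(R_S(x)) = v(x)$. Hence the second representation is $C(y) = \mathbb{E}_x[f(v(x)y(R_S(x)))]$ with no sign flip, so averaging and applying Jensen gives $f(v(x)\tfrac{y(x)+y(R_S(x))}{2})$. The only real subtlety—and the step I expect to require care—is precisely this bookkeeping of how $v$ transforms under the two operations: negation reverses the class, which is what yields the \emph{minus} sign (a difference) in the first bound, whereas reflection fixes the class, which is what yields the \emph{plus} sign (a sum) in the second. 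Everything else is routine change-of-variables plus a single application of convexity.
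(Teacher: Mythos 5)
Your proposal is correct and follows essentially the same route as the paper's proof: rewrite the cost via the sigmoid identity, use the symmetry as a measure-preserving change of variables (with $v(-x) = -v(x)$ for negation and $v(R_S(x)) = v(x)$ for reflection), average the two representations, and apply Jensen's inequality pointwise to the convex function $f(t) = \log(1+e^{-t})$. The sign bookkeeping you flag as the key subtlety is exactly the point the paper highlights in its own remark, so nothing is missing.
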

\begin{proof}
    First let $\Omega$ be class-symmetric about the origin. By the above lemma,
    \[
    C(y) = \mathbb{E}_{x \sim \Omega} \left[ \log\left(1 + e^{-v(x)y(x)}\right)\right]
    \]
    Since $P(F) = P(-F)$ for all $F \subset \Omega$, we can make a change of variables $x \mapsto -x$ to obtain
    \[
        C(y) = \mathbb{E}_{x \sim \Omega} \left[ \log\left(1 + e^{-v(-x)y(-x)}\right)\right] = \mathbb{E}_{x \sim \Omega} \left[ \log\left(1 + e^{v(x)y(-x)}\right)\right]
    \]
    We may therefore add the two expressions and divide by 2 to arrive at,
    \[
    C(y) = \frac{1}{2}\mathbb{E}_{x \sim \Omega} \left[\log\left(1 + e^{-v(x)y(x)}\right) + \log\left(1 + e^{v(x)y(-x)}\right)\right].\]
    By Jensen's inequality for convex functions,
    \[
    \frac{1}{2}\left[f(z_1) + f(z_2)\right] \geq f\left(\frac{z_1+z_2}{2}\right)
    \]
    for $f(z) = \log(1 + e^{-z})$.
    Applying this to $C[y]$, we obtain
    \begin{align}
    C(y) &\geq \log\left(1 + e^{\frac{1}{2}(-v(x)y(x) + v(x)y(-x))}\right)\\
    &= \log\left(1 + e^{-v(x)\frac{y(x) - y(-x)}{2}}\right)
    \end{align}

    If $\Omega$ is symmetric about the subspace $S$, then the same reasoning yields,
    \[
    C(y) \geq \log\left(1 + e^{-v(x)\frac{y(x) + y(R_S(x))}{2}}\right).
    \]
    Note that there is a sign difference from the previous expression, as negation flips the classes while reflection does not.
\end{proof}

Recall that if
\[
y(x) = (\phi'_G \circ \sigma \circ \phi_G)[X](i)
\]
then
\[
L[y](x) = \frac{1}{2}(\phi'_G \circ \phi_G)[X](i)
\]
and
\[
P_S[L[y]](x) = \frac{1}{2}(\phi'_G \circ \phi_G \circ P_S)[X](i)
\]
\begin{lemma}
    Let $y$ be any generalized 2-layer GCN without bias on $\Omega$. Then for any $x \in \Omega$,
    \[
    \frac{y(x)- y(-x)}{2} = L[y](x)
    \]
    and for any subspace $S$ of $\R^m$,
    \[
    \frac{L[y](x) + L[y](R_S(x))}{2} = R_S[L[y]](x)
    \]
\end{lemma}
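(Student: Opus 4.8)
The plan is to reduce both identities to the linearity of the aggregators, supplemented in the first case by a single special property of the ReLU. Throughout I would write the network as $y(x) = \phi'_G[\sigma(\phi_G[X])](i)$ and use repeatedly that $\phi_G$ and $\phi'_G$ are unbiased linear aggregators, so that a negation or a linear map applied to the input features may be pulled outside them.

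For the first identity I would start by evaluating $y(-x)$. Since $-x = (G,i,-v,-X)$ negates only the feature function, $y(-x) = \phi'_G[\sigma(\phi_G[-X])](i)$, and linearity of $\phi_G$ turns the inner argument into $\sigma(-\phi_G[X])$. The crucial observation is the pointwise ReLU identity $\sigma(z) - \sigma(-z) = z$, which holds for every real $z$ and is therefore applied coordinatewise to the vector $\phi_G[X]$. Subtracting and then invoking linearity of $\phi'_G$ gives
\[
y(x) - y(-x) = \phi'_G\big[\sigma(\phi_G[X]) - \sigma(-\phi_G[X])\big](i) = \phi'_G[\phi_G[X]](i) = (\phi'_G \circ \phi_G)[X](i),
\]
and dividing by two recovers $L[y](x)$ by definition.

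For the second identity no nonlinearity is involved, so everything is routine linear algebra. I would write $L[y](R_S(x)) = \tfrac{1}{2}(\phi'_G \circ \phi_G \circ R_S)[X](i)$, using that $R_S$ acts linearly on each feature vector and commutes past the linear aggregators. The key algebraic fact is that the reflection across $S$ satisfies $R_S = 2P_S - I$, equivalently $I + R_S = 2P_S$. Averaging the two expressions then yields
\[
\frac{L[y](x) + L[y](R_S(x))}{2} = \tfrac{1}{4}(\phi'_G \circ \phi_G)[(I + R_S)X](i) = \tfrac{1}{2}(\phi'_G \circ \phi_G \circ P_S)[X](i),
\]
which is exactly the quantity defined above as $P_S[L[y]](x)$.

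The only genuinely delicate point is the first identity, where the nonlinearity does not simply pass through the difference; everything hinges on the odd-part identity $\sigma(z) - \sigma(-z) = z$ for the ReLU, which is what lets the antisymmetric combination $\tfrac{1}{2}(y(x) - y(-x))$ collapse to the fully linear operator $\tfrac{1}{2}(\phi'_G \circ \phi_G)$. The second identity, by contrast, is immediate once the reflection is rewritten as $R_S = 2P_S - I$ and the linearity of the aggregators is used to move $P_S$ inside.
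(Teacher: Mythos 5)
Your proof is correct and follows essentially the same route as the paper's: linearity of the aggregators plus the odd-part identity $\sigma(z)-\sigma(-z)=z$ for the first equation, and linearity plus the relation between $R_S$ and $P_S$ for the second (the paper writes $X = P_S(X)+P_\perp(X)$ and $R_S(X) = P_S(X)-P_\perp(X)$ where you write $I + R_S = 2P_S$, which is the same algebraic fact). Note also that, like the paper's own proof, you correctly establish the second identity with $P_S[L[y]](x)$ on the right-hand side; the $R_S[L[y]](x)$ appearing in the lemma statement is a typo, since only $P_S[L[y]]$ is defined and that is what the subsequent theorem uses.
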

\begin{proof}
    \begin{align}
        y(x) - y(-x) &= (\phi'_G \circ \sigma \circ \phi_G)[X](i) - (\phi'_G \circ \sigma \circ \phi_G)[-X](i)\\
        &= \phi'_G\big(\sigma(\phi_G(X)) - \sigma(\phi_G(-X)) \big)(i) \quad (\text{by linearity of } \phi_G')\\
        &= \phi'_G\big(\sigma(\phi_G(X)) - \sigma(-\phi_G(X)) \big)(i) \quad (\text{by linearity of } \phi_G)\\
        &= \phi_G'(\phi_G(X))(i) \quad (\text{as } \sigma(z) - \sigma(-z) = z)\\
        &= 2L[y](x)
    \end{align}
    which after dividing by $2$ proves the first expression. Next,
    \begin{align}
    &2(L[y](x) + L[y](R_S(x)))\\
    &= (\phi'_G \circ \phi_G)[X](i) + (\phi'_G \circ \phi_G)[R_S(X)](i)\\
    &= (\phi'_G \circ \phi_G)[P_S(X) + P_\perp(X)](i) + (\phi'_G \circ \phi_G)[P_S(X) - P_\perp(X)](i)\\
    &= 2(\phi'_G \circ \phi_G)[P_S(X)](i) \quad (\text{by linearity of }\phi'_G \text{ and } \phi_G)\\
    &= 4P_S[L[y]](x)
    \end{align}
    which after dividing by $4$ proves the second expression.
\end{proof}

\begin{theorem}
    Let $\Omega$ be a 2-class attributed random graph model and let $y$ be any two-layer GCN without bias on $\Omega$. If $\Omega$ is class-symmetric about the origin then,
    \[
    C(L[y]) \leq C[y].
    \]
    Furthermore, if $\Omega$ is symmetric about $S$ then,
    \[
    C(P_S[L[y]]) \leq C(L[y])
    \]
\end{theorem}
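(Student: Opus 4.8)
The plan is to chain together the three preceding lemmas, since they already isolate the two genuinely substantive ingredients—convexity combined with the symmetry change-of-variables (the Jensen lemma) and the algebraic identities relating $y$, $L[y]$, and $P_S[L[y]]$ (the identity lemma). The theorem itself is then just a matter of composing these correctly and keeping careful track of which model each lemma is applied to.

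First I would prove the class-symmetric inequality $C(L[y]) \leq C(y)$. Applying the first part of the symmetry lemma to the model $y$ gives
\[
C(y) \geq \mathbb{E}_{x \sim \Omega}\left[\log\left(1 + e^{-v(x)\frac{y(x) - y(-x)}{2}}\right)\right].
\]
By the identity lemma, $\frac{y(x) - y(-x)}{2} = L[y](x)$, so the right-hand side is exactly
\[
\mathbb{E}_{x \sim \Omega}\left[\log\left(1 + e^{-v(x)L[y](x)}\right)\right],
\]
which equals $C(L[y])$ by the cost-function lemma applied to the model $L[y]$. (Here I would note explicitly that $L[y]$ is itself a legitimate model, since it assigns a real number to each $x$, so the cost-function lemma applies to it.) Hence $C(L[y]) \leq C(y)$.

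Next I would prove $C(P_S[L[y]]) \leq C(L[y])$ by the same template, but now applying the second part of the symmetry lemma to the model $L[y]$ rather than to $y$. This yields
\[
C(L[y]) \geq \mathbb{E}_{x \sim \Omega}\left[\log\left(1 + e^{-v(x)\frac{L[y](x) + L[y](R_S(x))}{2}}\right)\right],
\]
and the identity lemma gives $\frac{L[y](x) + L[y](R_S(x))}{2} = P_S[L[y]](x)$, so the right-hand side equals $C(P_S[L[y]])$. This completes the argument.

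The only point requiring care—and the closest thing to an obstacle—is bookkeeping the sign convention: the first inequality uses the difference $y(x) - y(-x)$, because negation flips the class label $v$ and thus produces the minus sign in the exponent, whereas the second uses the sum $L[y](x) + L[y](R_S(x))$, because reflection preserves $v$ and so introduces no sign flip. One must also confirm that the second part of the symmetry lemma is genuinely applicable to $L[y]$; since that lemma is stated for an \emph{arbitrary} model and $L[y]$ qualifies, the substitution is valid. Everything else reduces to routine substitution.
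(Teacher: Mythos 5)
Your proposal is correct and follows essentially the same route as the paper: the paper's proof of this theorem is precisely the composition you describe, applying the symmetry/Jensen lemma and the identity $\frac{y(x)-y(-x)}{2} = L[y](x)$ to $y$ for the first inequality, and then applying the same pair of lemmas to the model $L[y]$ (using $\frac{L[y](x)+L[y](R_S(x))}{2} = P_S[L[y]](x)$) for the second. Your two points of care---that $L[y]$ qualifies as a model in its own right, and that negation flips $v(x)$ while reflection does not---are exactly the bookkeeping the paper relies on as well.
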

\begin{proof}
    If $\Omega$ is class-symmetric about the origin then,
    \begin{align}
        C[y] &\geq \mathbb{E}_{x \sim \Omega} \left[ \log\left(1 + e^{-v(x)\frac{y(x) - y(-x)}{2}}\right)\right]\\
        &=  \mathbb{E}_{x \sim \Omega} \left[ \log\left(1 + e^{-v(x)L[y](x)}\right)\right]\\
        &= C[L[y]]
    \end{align}
    Similarly, if $\Omega$ is symmetric about $S$ then the above lemmas applied to $L[y]$ yield,
    \begin{align}
        C(L[y]) &\geq \mathbb{E}_{x \sim \Omega} \left[ \log\left(1 + e^{-v(x)\frac{L[y](x) + L[y](R_S(x))}{2}}\right)\right]\\
        &= \mathbb{E}_{x \sim \Omega} \left[ \log\left(1 + e^{-v(x)P_S[L[y]](x)}\right)\right]\\
        &= C(P_S[L[y]]).
    \end{align}
\end{proof}

\subsection{Accuracy analysis in the optimal-parameter case}

In light of the preceding theorem, we now study the accuracy of the optimal linear, two-layer GCN, which we are able to compute in integrals. Let
\[
y(x) = K\sum_{j \in \mathcal{N}(x)}\sum_{k \in \mathcal{N}(j)} X(k) \cdot m
\]
over a cSBM with expected average node degree $d$ and edge information parameter $\lambda$, where $K$ is a constant, direction $m \in \R^{m_\mathrm{feat}}$, and features are given by
\[
X(i) = v_i \mu m + z_i
\]
where $v_i \in \{\pm 1\}$ is the class and $z_i$ is the Gaussian error with mean 0 and variance $\sigma^2 I$. In this case, $X(i) \cdot m$ is given by
\[
X(i) \cdot m = v_i \mu + z_i \cdot m = v_i + b_i
\]
where $b_i = z_i \cdot m$ is Gaussian with mean $0$ and variance $\sigma^2$.

In our analysis, self-loops will be added. Furthermore, $\din$ and $\dout$ will denote,
\[
\din = \frac{d + \lambda \sqrt{d}}{2}, \quad \dout = \frac{d - \lambda \sqrt{d}}{2}.
\]

In the large node limit, the number of neighbors of a node $i$ having the same class, $\nin$, is distributed according to a Poisson distribution with mean $\din$. Similarly, the number of neighbors having the opposite class, $\nout$, is distributed according to a Poisson distribution with mean $\dout$.

The number of same class neighbors of the same-class neighbors of $i$, denoted $\ninin$ is given by a Poisson distribution conditional on $\nin$ with mean $\din \nin$. Similarly the number of opposite class neighbors of the same class neighbors of $i$, denoted $\ninout$ is given by a Poisson distribution conditional on $\nin$ with mean $\dout \nin$. We define $\noutin$ and $\noutout$ similarly.

Let $\nintwo$ and $\nouttwo$ denote $\ninin+\noutout$ and $\ninout+\noutin$ respectively. 
Intuitively, $\nintwo$ and $\nouttwo$ denote the number of same class and opposite class nodes distance two from node $i$. 
By independence, $\nintwo$ and $\nouttwo$ are given by a Poisson distribution conditional on $\nin$ and $\nout$ with means 
$\din\nin + \dout\nout$ and 
$\dout\nin+\din\nout$, respectively. 
Then, if we let $p(k,\lambda) = \frac{\lambda^k e^{-\lambda}}{k!}$ be the pmf of the Poisson distribution, the probability of $\nin$, $\nout$, $\nintwo$, and $\nouttwo$ occurring can be factored as
\begin{align}
&P(\nin, \nout, \nintwo, \nouttwo)\\
&= p(\nin, \din)\cdot p(\nout, \dout)
\cdot p(\nintwo, \din\nin + \dout\nout)\cdot p(\nouttwo, \dout\nin+\din\nout).
\end{align}

Given $\nin$, $\nout$, $\nintwo$, and $\nouttwo$, the model $y(x)$ will have mean
\[
\mu K \sum_{j \in \mathcal{N}(x)}\sum_{k \in \mathcal{N}(j)} v_k
\]
as the error terms have mean 0. Taking self-loops into account, there are $(\nin + \nout + 1)$ 2-walks to the central node, two 2-walks to each of the neighbors, and one 2-walk to each of the nodes at distance 2. Recall the mean may be calculated linearly while variance satisfies
\[
\Var\left(\sum_i a_iX_i \right) = \sum_i a_i^2 \Var(X_i)
\]
where the $\{X_i\}_i$ are independent distributions. The conditional mean is therefore given by
\[
K\mu v(x)\bigg((\nin + \nout + 1) + 2(\nin - \nout) + \nintwo - \nouttwo \bigg)
\]
and variance
\[
K^2 \sigma^2 \bigg((\nin+\nout+1)^2 + 4(\nin+\nout) + (\nintwo + \nouttwo) \bigg).
\]
When the graph structure is fixed, the model outputs will be be Gaussian-distributed (as it is a sum of Gaussian clouds), and its accuracy is the probability that its sign matches $v(x)$. By symmetry, we may assume $v(x) = 1$. If $\Phi$ is the cdf of the standard distribution, then this accuracy is given by $\Phi$ applied to the mean divided by the standard deviation. The accuracy is then,
\begin{align}
&\Phi \left(\frac{K\mu(1 + 3\nin - \nout + \nintwo - \nouttwo)}{|K|\sigma\sqrt{(\nin + \nout + 1)^2 + 4(\nin + \nout) + (\nintwo + \nouttwo)}}\right)\\
&= \Phi\bigg(\psi\bigg(\sgn(K)\tfrac{\mu}{\sigma}, \nin, \nout, \nintwo, \nouttwo\bigg)\bigg)
\end{align}
where
\[
\psi(c, \nin, \nout, \nintwo, \nouttwo) = c\frac{1 + 3\nin - \nout + \nintwo - \nouttwo}{\sqrt{(\nin + \nout + 1)^2 + 4(\nin + \nout) + (\nintwo + \nouttwo)}}.
\]
The total accuracy is then given by
\[
\sum_{\nin, \nout, \nintwo, \nouttwo = 0}^\infty P(\nin, \nout, \nintwo, \nouttwo) \Phi\bigg(\psi\bigg(\sgn(K)\tfrac{\mu}{\sigma}, \nin, \nout, \nintwo, \nouttwo\bigg)\bigg).
\]

\section{Complete set of accuracy maps}
\subsection{Means}
    \label{appendix:means}
The comprehensive results for mean values of our experiments are found below. Additional experiments using other architectures or wider bounds may be conducted using our code in GitHub 

\begin{figure}[htbp]
    \centering
    \textbf{Binomial Degree Distribution}\par\medskip
    \includegraphics[width=\textwidth]{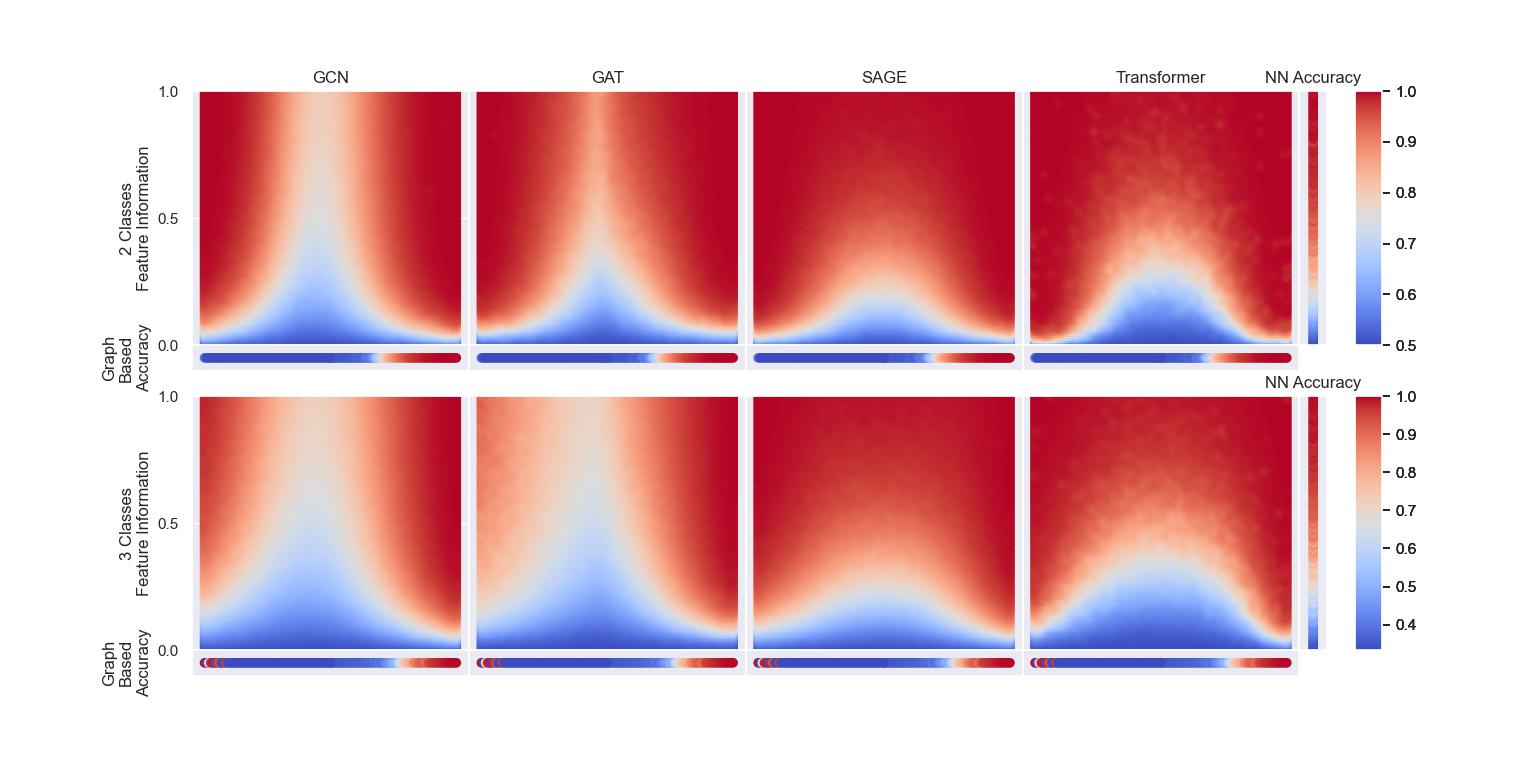}
    \includegraphics[width=\textwidth]{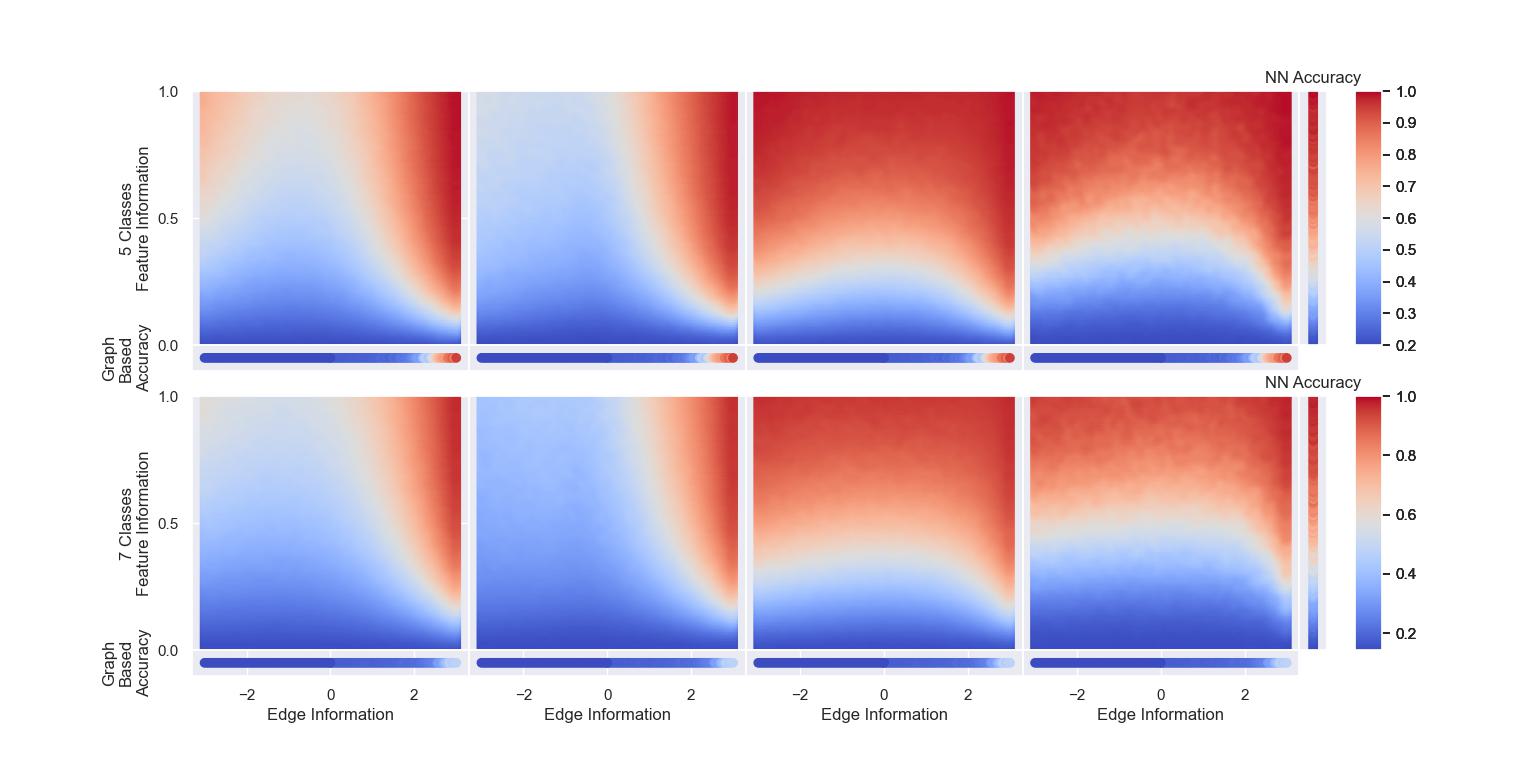}
    \caption{We compare accuracies over the distributed graphs across varying class sizes. We also depict the accuracy curves of a regular feedforward neural network and that of spectral clustering on the same datasets to the right and bottom of each plot respectively. Using these plots we compare how well each architecture performs on an increased number of classes. Additionally we can view how performance changes across different architectures.}
    \label{fig:mean_panels}
\end{figure}
When considering the Binomial SBM, we see that SAGE performed the best of any GNN architecture across any class size. As we increase the number of classes more information is needed for any architecture to classify correctly. Additionally, the increase in class size more adversely affects the heterophilous regime than the homophilous regime. By comparing the figures in ~\cref{fig:mean_panels} and~\cref{fig:dc_mean_panels} we can observe how each model is affected by degree correction across any number of classes.
\begin{figure}[htbp]
    \centering
    \textbf{Degree-Corrected Degree Distribution}\par\medskip

    \includegraphics[width=\textwidth]{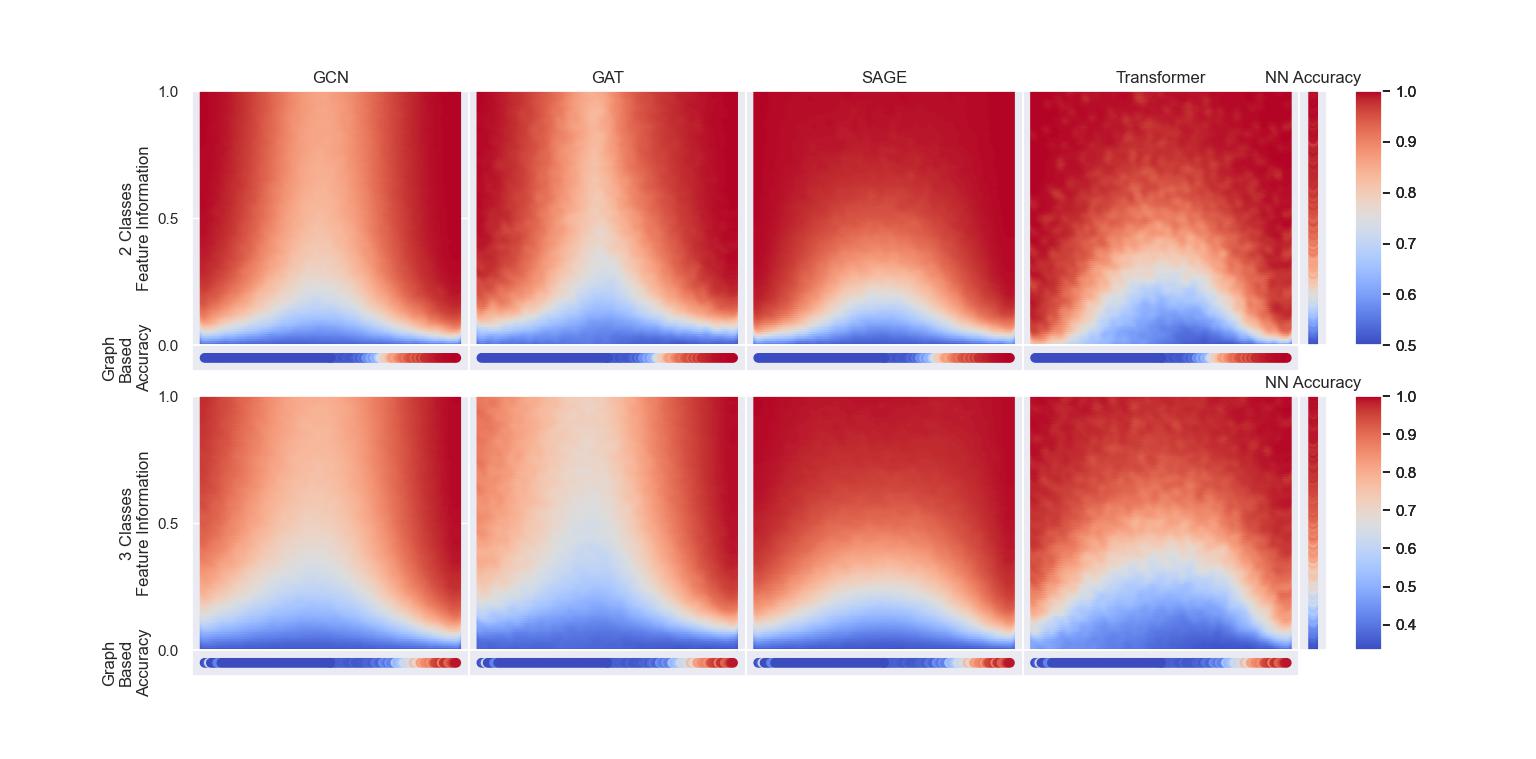}
    
    \includegraphics[width=\textwidth]{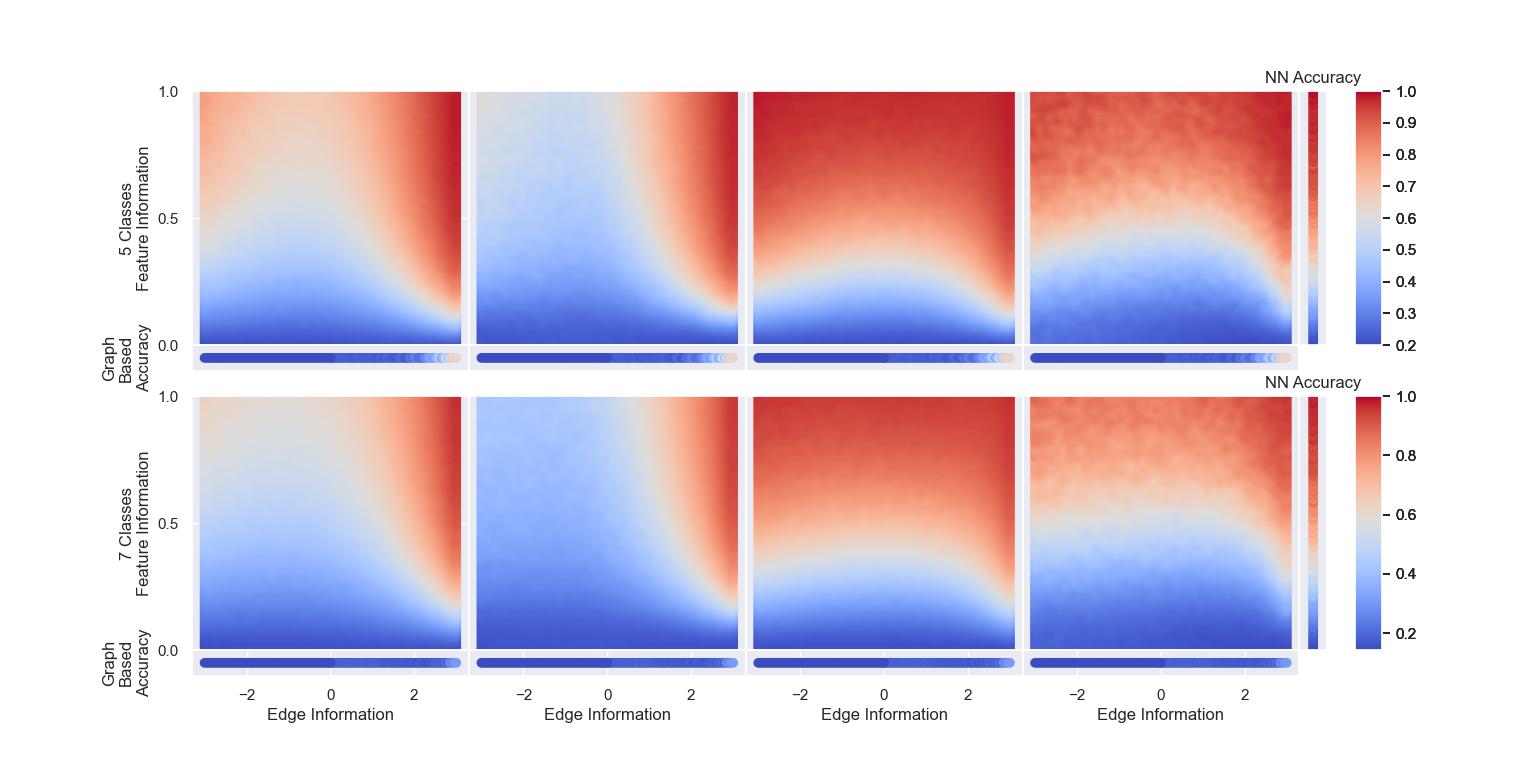}
    \caption{We compare accuracies over degree-corrected graphs with varying class sizes. The GCN did better on degree corrected graphs across any number of classes. This can be observed by viewing how the blue region in the top figures shrinks in the degree-corrected case. The performance of the Transformer improved in degree-corrected cases for class numbers of two and three, yet it decreased performance for class numbers of five and seven. The performance of SAGE and GAT were mostly unaffected by the degree correction.}
    \label{fig:dc_mean_panels}
\end{figure}

In~\cref{fig:mean_regions} and~\cref{fig:dc_mean_regions} we view the various regimes across which each architecture outperforms the others. As we increase the class sizes, the favorable regime for the neural network increases in size, showing that in many cases it is simply better to ignore edges and utilize solely the feature information. However, it should be noted that in most of the cases, there is always a regime where the GNN architecture outperforms both of the baselines.

\begin{figure}[htbp]
    \centering
    \textbf{Binomial Degree Distribution}\par\medskip

    \includegraphics[width=\textwidth]{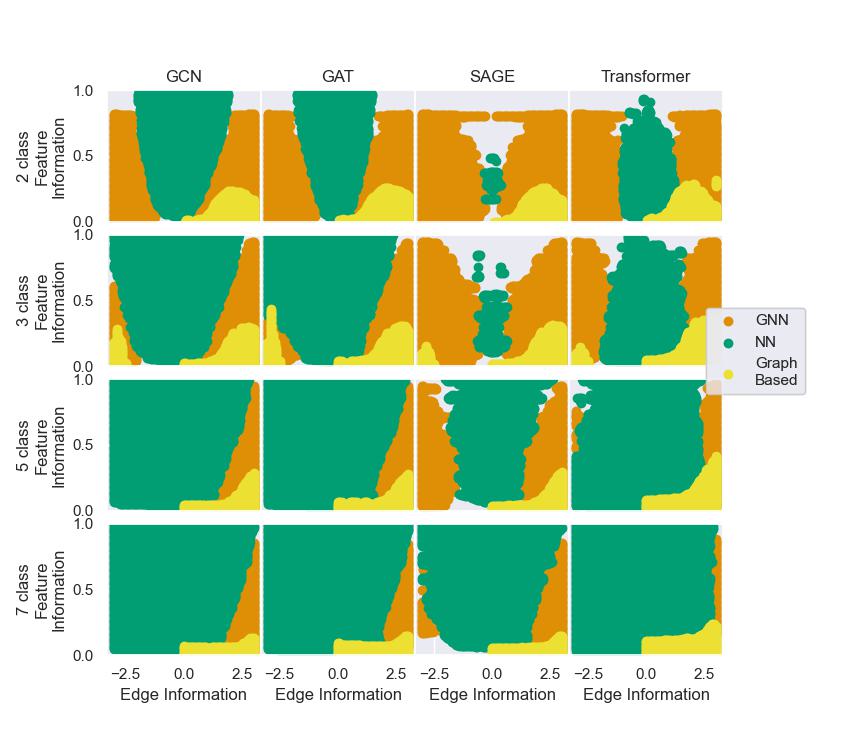}
    \caption{Regions depicting where each architecture outperforms the others across the SBM graphs. Here we can compare how varying parts of the data effects the shape and sizes of the favorable regimes}
    \label{fig:mean_regions}
\end{figure}
\begin{figure}[htbp]
    \centering
    \textbf{Degree-Corrected Degree Distribution}\par\medskip

    \includegraphics[width=\textwidth]{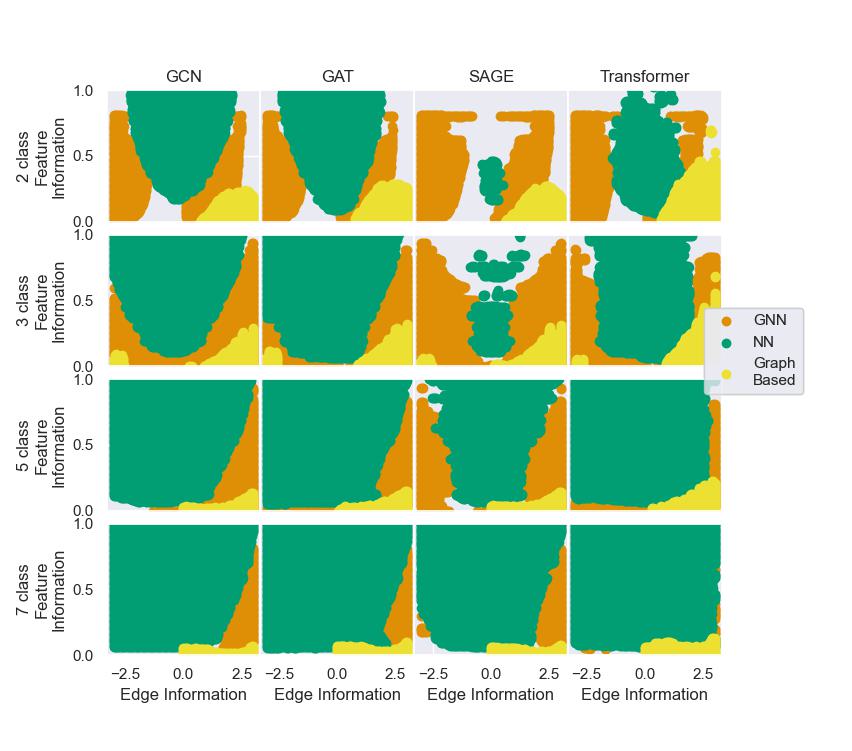}
    \caption{Regions where each architecture outperforms the others across degree-corrected graphs.}
    \label{fig:dc_mean_regions}
\end{figure}
\newpage
\subsection{Maxes}
\label{appendix:maxes}
The comprehensive results for max values of our experiments are found below. Additional experiments using other architectures or wider bounds may be conducted using our code in our GitHub.

\begin{figure}[htbp]
    \label{fig:max_panels}
    \centering
    \textbf{Binomial Degree Distribution}\par\medskip
    \includegraphics[width=\textwidth]{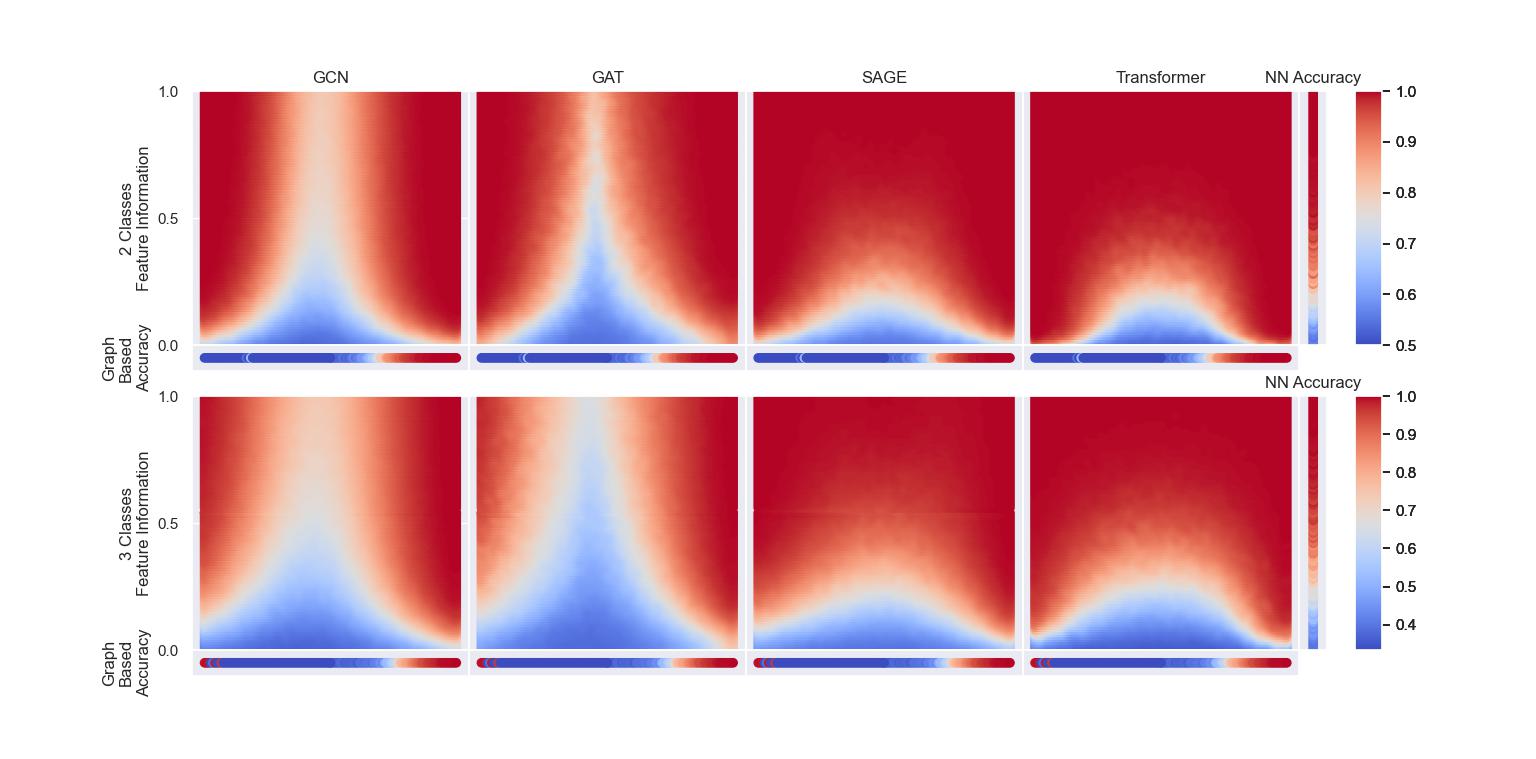}
    
    \includegraphics[width=\textwidth]{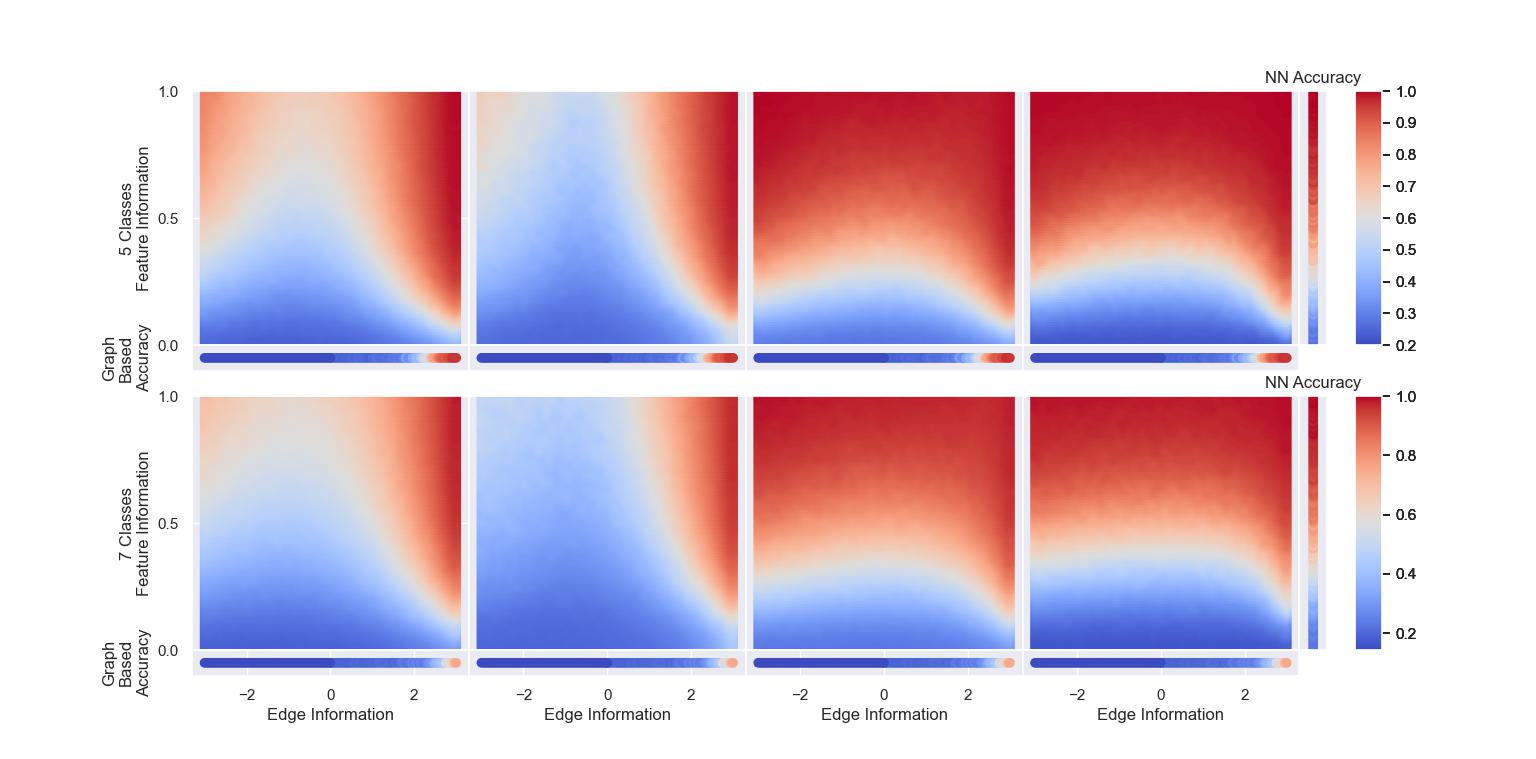}
    \caption{We compare accuracies over the distributed graphs across varying class sizes. Note that the blue regions of these graphs are much more pointed than those of the mean graphs.}
\end{figure}
When we view the maxes in light of the average graphs, we see that the blue portions of the maxes are much more steeply shaped than that of the averaged. This likely demonstrates that while both the averages and the maxes perform poorly towards the middle (where there is a lot of edge noise) the model is able to achieve better along the sides of the graph where we have less feature information but more edge information.
\begin{figure}[htbp]
    \label{fig:dc_max_panels}
    \centering
    \textbf{Degree-Corrected Degree Distribution}\par\medskip

    \includegraphics[width=\textwidth]{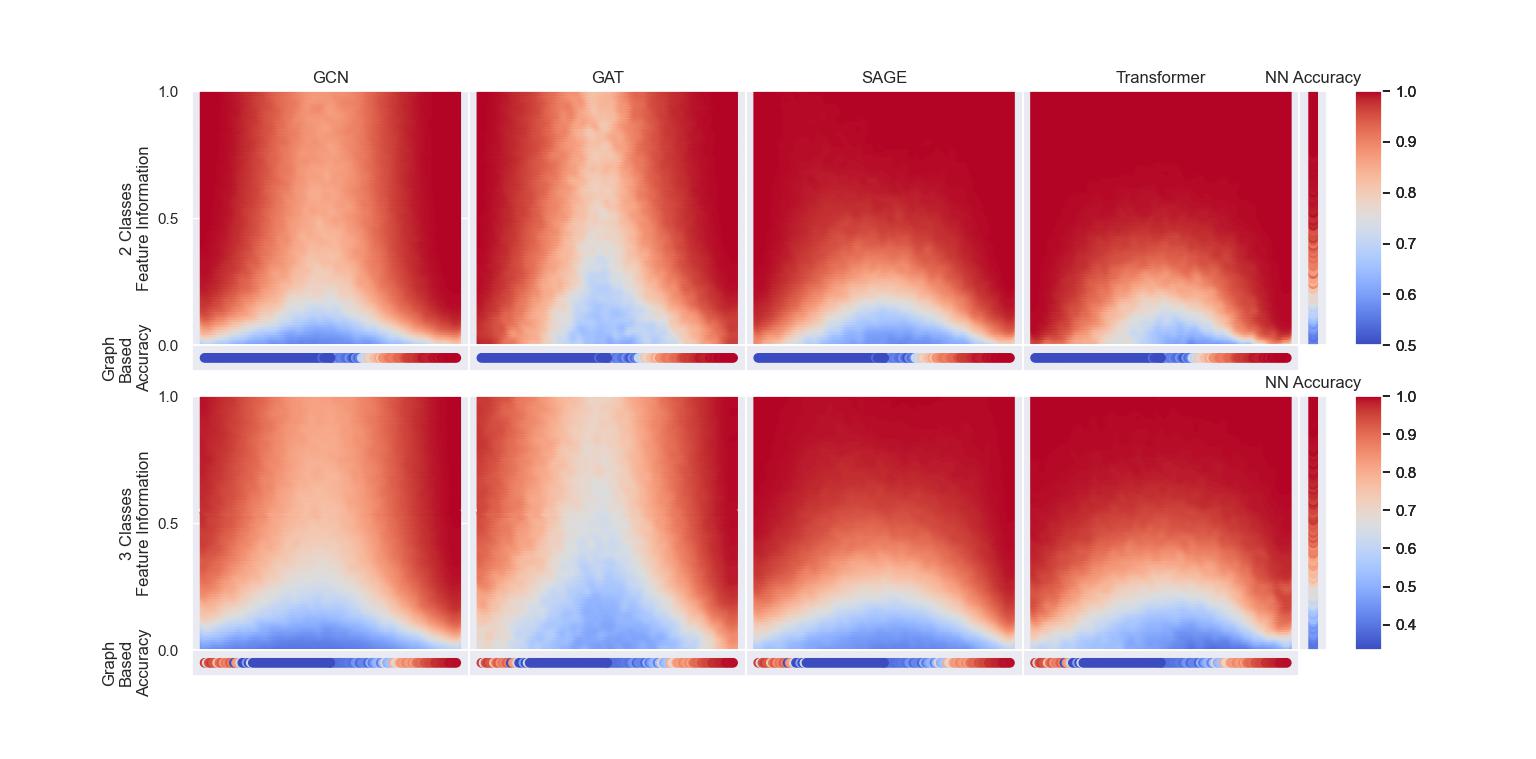}
    
    \includegraphics[width=\textwidth]{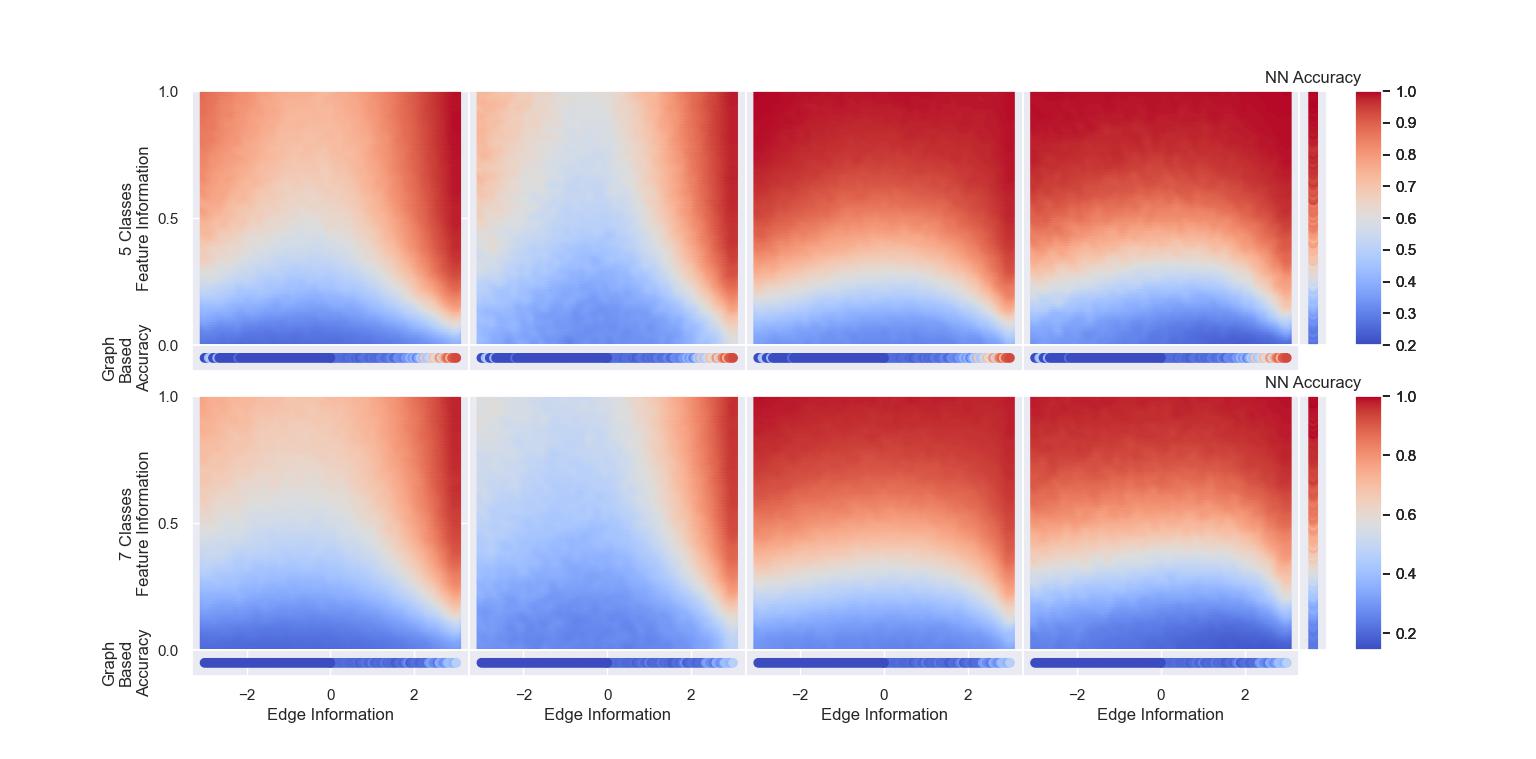}
    \caption{We compare accuracies over degree-corrected graphs across varying class sizes.}
\end{figure}
In general it seems that the models benefited from operating on heavy-tailed graphs. In particular we see that the GCN and the GAT performed better on degree corrected graphs across all class sizes. The Transformer and SAGE did not see as stark of an increase in performance, but did perform noticeably better on class sizes of 2 and 3.

\begin{figure}[htbp]
    \centering
    \textbf{Binomial Degree Distribution}\par\medskip

    \includegraphics[width=\textwidth]{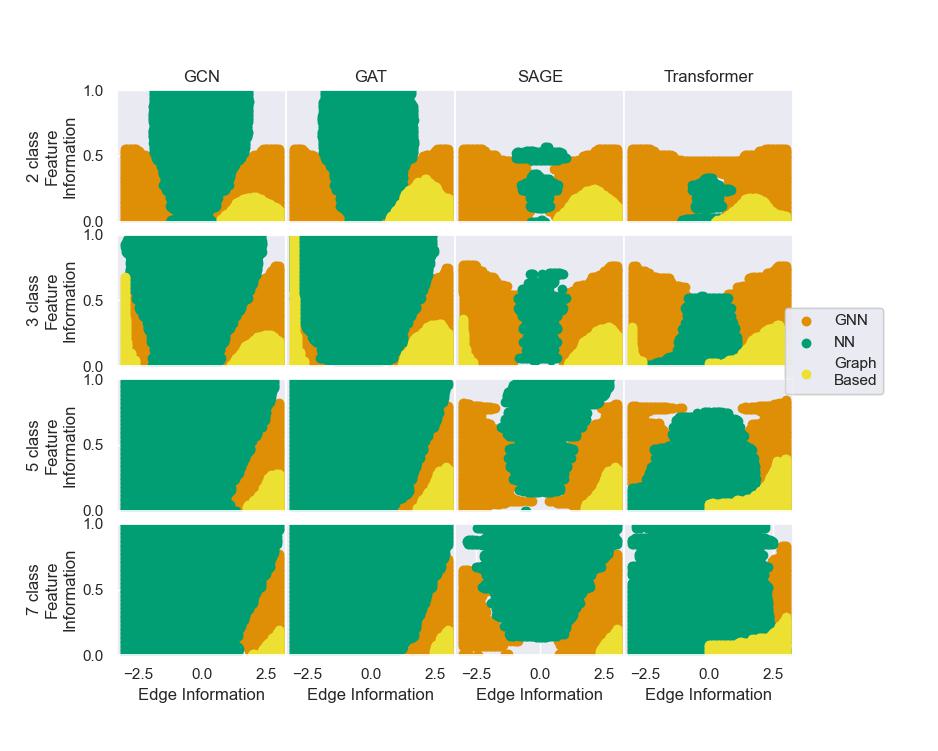}
    \caption{Regions of where each architecture outperforms the others across the SBM graphs. Here we can compare how varying parts of the data effects the shape and sizes of the favorable regimes}
    \label{fig:max_regions}
\end{figure}
\begin{figure}[htbp]
    \centering
    \textbf{Degree-Corrected Degree Distribution}\par\medskip

    \includegraphics[width=\textwidth]{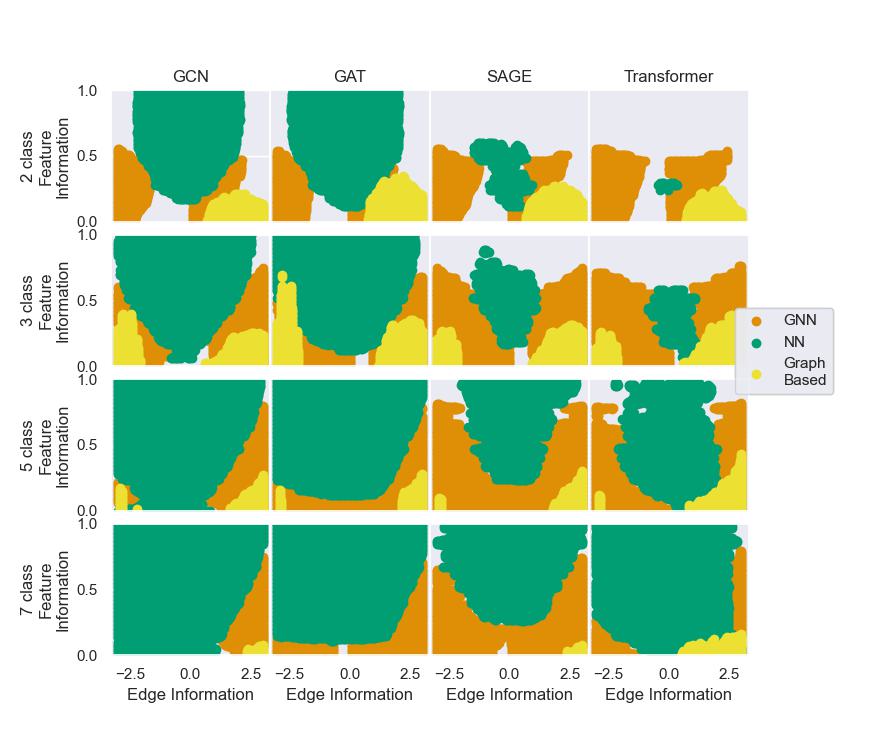}
    \caption{Regions where each architecture outperforms the others across degree-corrected graphs.}
    \label{fig:dc_max_regions}
\end{figure}

\section{Further analysis of the role of higher order structure}
\label{appendix:higher_order}

We provide further insights into the effects of higher order structure in GNNs. To illustrate the impact of structure, we develop several variants of attributed Stochastic Block Models. We track accuracy on a Hierarchical Stochastic Block Model (hSBM), a Epsilon Nearest Neighbors Stochastic Block Model (ENN-SBM), and a Stochastic Block Model with Triadic Closure. 

The implementations can be found in our code on GitHub. To create the hSBM, we generate 5 sub clusters for each class in the SBM that have slightly more similar features and self higher connectivity. We generate an Epsilon Nearest Neighbor Graph by sampling 1000 points from the unit square and randomly assigning half to each class. We then generate the edges by adding an edge between nodes if $||\text{node}_i-\text{node}_j||\leq \epsilon_{\text{intra}}$ for nodes of the same class for nodes of the same class and $||\text{node}_i-\text{node}_j||\leq \epsilon_{\text{inter}}$ for nodes of different classes. Lastly, we generate a triadic closed SBM by taking a normal SBM and closing $30\%$ of the possible triadic closures.

\begin{figure}
    \centering
    \includegraphics[width=\textwidth]{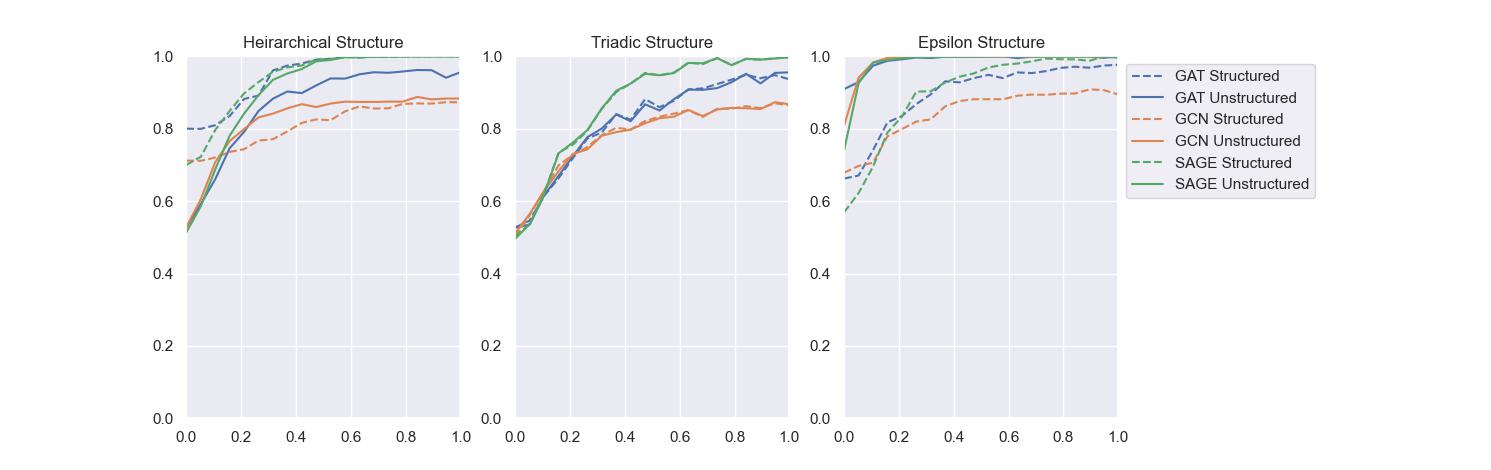}
    \caption{Effects of removing higher order structure in structured SBMs. We vary feature separability in each of the examples from 0 to 1. GNNs most notable increased under ENN-SBMs}
    \label{fig:higher_order}
\end{figure}
We note that as seen in \cref{fig:higher_order}
GNNs perform best on graphs lacking both geographic structure (encoded by ENN-SBM). One possible reason for this is that rewiring the graphs reduces the diameter of a graph, encouraging nodes to be closer to the center of the graph or their own communities. We also note that triadic closure has virtually no effect on the performance of GNNs, while results for hierarchical structure vary across architectures.

Hierarchical structure and spatial structure can both be seen as a version of label noise as a perfect graph might only connect groups that are relevant to one another. This is not true in all cases as often both of these attributes can contribute valuable information to a machine learning process. 

\end{document}